\documentclass[letterpaper, 10pt, conference]{ieeeconf}
\usepackage{times}
\usepackage[pdftex]{graphicx}
\usepackage{subfigure}

\usepackage{amsmath,amssymb,amsopn,amstext,amsfonts, amsthm}
\usepackage{cancel}
\usepackage[space]{cite}
\usepackage{pdfsync}
\usepackage{balance}
\usepackage{color}
\usepackage{mathtools}
\usepackage[ruled,norelsize, linesnumbered]{algorithm2e}
\usepackage{bm}
\usepackage{diagbox}
\usepackage{float}
\usepackage[outdir=./]{epstopdf}
\usepackage{url}
\usepackage{multirow}
\usepackage{makecell}
\usepackage{adjustbox}
\usepackage[font=small,skip=2pt]{caption}

\usepackage[linkcolor=black,citecolor=black,urlcolor=black,colorlinks=true]{hyperref}

\newtheorem{prop}{Proposition}

\theoremstyle{remark}

\bibliographystyle{IEEEtran}

\newcommand{\func}{\mathtt}
\newcommand{\point}{\mathbf}
\newcommand{\set}{\mathcal}

\SetKw{Continue}{continue}
\makeatletter
\newcommand{\removelatexerror}{\let\@latex@error\@gobble}
\makeatother

\DeclareGraphicsExtensions{.pdf,.png,.jpg,.eps}
\IEEEoverridecommandlockouts
\overrideIEEEmargins

\title{\LARGE \bf  Trajectory Replanning for Quadrotors Using \\ Kinodynamic Search and Elastic Optimization }
\author{Wenchao Ding, Wenliang Gao, Kaixuan Wang, and Shaojie Shen%
  \thanks{This work was supported by the Hong Kong PhD Fellowship Scheme, and the Joint PG Program under the HKUST-DJI Joint Innovation Laboratory. All authors are with the Department of Electronic and Computer Engineering,
          Hong Kong University of Science and Technology, Hong Kong, China.
          {\tt\small wdingae@ust.hk, wenliang.gao@ust.hk, kwangap@ust.hk, eeshaojie@ust.hk}}%
}
\begin{document}

\maketitle
\thispagestyle{empty}
\pagestyle{empty}

\begin{abstract}
We focus on a replanning scenario for quadrotors where considering time efficiency, non-static initial state and dynamical feasibility is of great significance. We propose a real-time B-spline based kinodynamic (RBK) search algorithm, which transforms a position-only shortest path search (such as A* and Dijkstra) into an efficient kinodynamic search, by exploring the properties of B-spline parameterization. The RBK search is greedy and produces a dynamically feasible time-parameterized trajectory efficiently, which facilitates non-static initial state of the quadrotor. To cope with the limitation of the greedy search and the discretization induced by a grid structure, we adopt an elastic optimization (EO) approach as a post-optimization process, to refine the control point placement provided by the RBK search. The EO approach finds the optimal control point placement inside an expanded elastic tube which represents the free space, by solving a Quadratically Constrained Quadratic Programming (QCQP) problem. We design a receding horizon replanner based on the local control property of B-spline. A systematic comparison of our method against two state-of-the-art methods is provided. We integrate our replanning system with a monocular vision-based quadrotor and validate our performance onboard.
\end{abstract}

\section{Introduction}
Micro aerial vehicles (MAVs), in particular quadrotors, have gained wide popularity in various inspection and exploration applications. To meet the need for fully autonomous navigation in unexplored environments, a real-time local replanner producing smooth, dynamically feasible trajectories is of great significance. Many existing planning methods \cite{richter2016polyunqp}\cite{mellinger2011minsnap}\cite{liu2017sfc} \cite{Feigao2017hg} follow a path planning and path parameterization two-step pipeline. The path planning part only produces unparameterized path, while the path parameterization part chooses a feasible dynamical profile for the path. The two-step pipeline is popular due to its efficiency, but is problematic in replanning scenario, since the path planning part is unaware of the vehicle's non-static initial states. For instance, the geometrically shortest path may diverge from the initial velocity direction, resulting in jerky trajectories or failure of the path parameterization.
\begin{figure}[htb]
	\vspace{+0.5cm}
	\centering
	\includegraphics[width=0.30\textwidth]{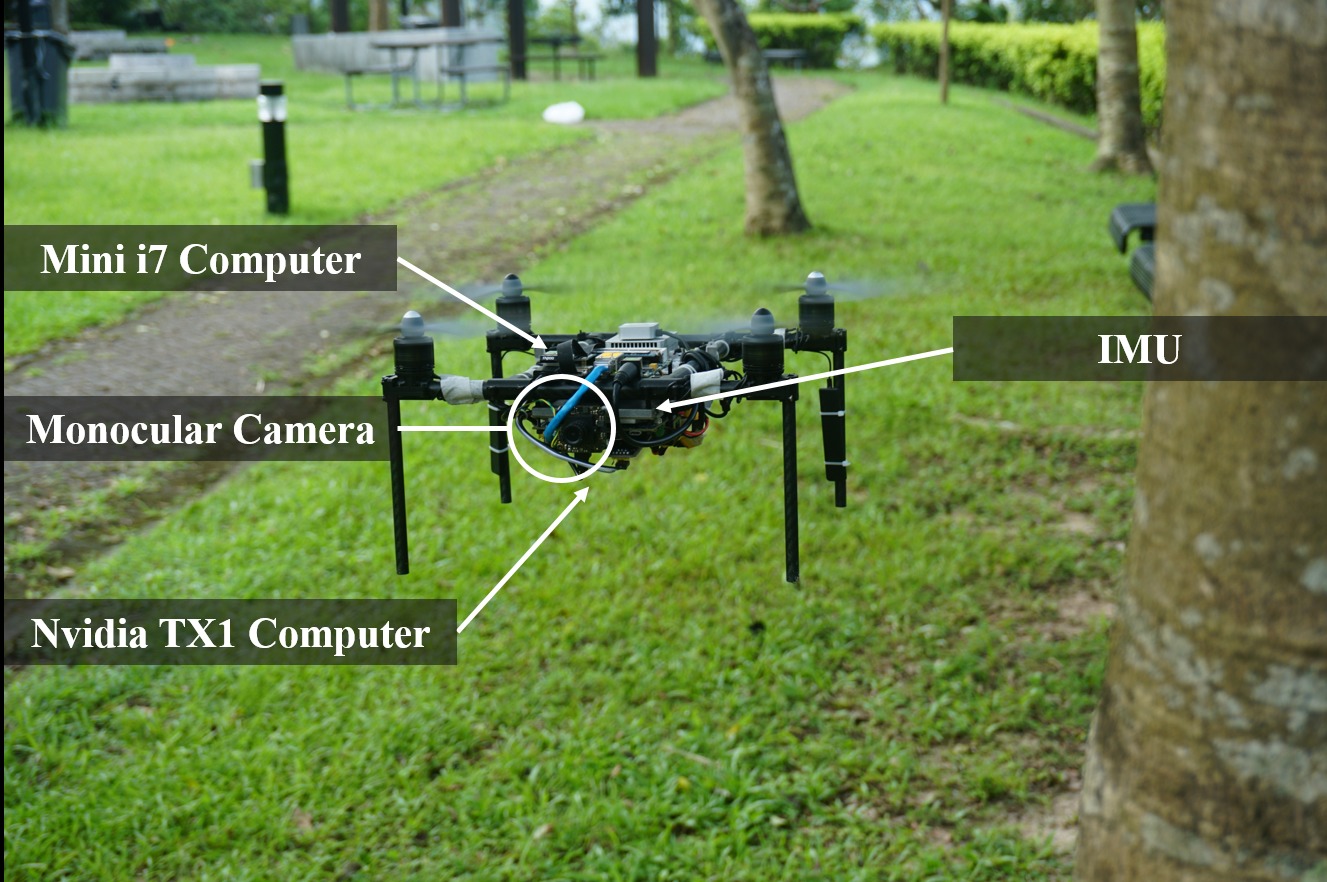}
	\caption{Illustration of our experimental testbed which is equipped a monocular camera, an Intel i7 processor and a NVIDIA Jetson TX1. The localization module is based on our Monocular Visual Inertial Navigation System (VINS-Mono) \cite{qin2017vins}, and mapping module is based on monocular dense mapping and TSDF fusion. \url{https://www.youtube.com/watch?v=obwV1PFuPC0}
		}
	\label{fig:drone}
	\vspace{-0.75cm}
\end{figure}

Considering the replanning for quadrotor which has non-trivial dynamics, it is highly desirable to use kinodynamic motion planners, to facilitate non-static initial state and ensure dynamical feasibility. Sampling-based methods such as kinodynamic RRT* \cite{webb2013kinodynamic} are asymptotically optimal but computationally expensive with an execution time of 10s to 100s. Allen \textit{et al.} \cite{allen2016real} proposed a real-time kinodynamic adaptation of FMT*, however, the computation time is still around half a second. Another issue of sampling-based methods is that the randomized behavior may result in unpredictable performance \cite{liu2017sfc}, especially when only limited sampling is permitted. Search-based method is suitable for replanning in the sense that its results are consistent given similar observations of the environment. To this end, Liu \textit{et al.} \cite{Liu2017smp} proposed a primitive-based resolution-complete (i.e., optimal in the induced lattice graph) search method using linear quadratic minimum time control. However, the running time is sensitive to both the discretization level and the order of control input. For the high-order control input (such as jerk-controlled) or large dynamic range which needs a higher discretization level, the running time is not stable and may be around one or two seconds.

In this paper, we explore a novel angle of kinodynamic search by searching for B-spline control point placement, to improve the time efficiency. Different from many existing works \cite{elbanhawi2015bspsmooth}\cite{koyuncu2008probbsp}\cite{maekawa2010curbspline} which follow the two-step pipeline and use B-spline as the path parameterization method, we produce time-parameterized trajectory in the search process directly. Our contribution is that we explore three properties of B-spline to transform a low-complexity position-only search (such as A* and Dijkstra) into an efficient kinodynamic search. First, thanks to B-spline's \textit{local control} property as elaborated in Sec.~\ref{sec:Bspline_property}, the evaluation of B-spline can be done locally, resulting in an efficient expansion of control points, without re-evaluation of whole trajectory. Second, based on the \textit{convex hull} property of B-spline, the dynamical feasibility constraints can be rewritten in a linear form, avoiding the computation overhead brought by feasibility checking. Third, the control cost can be evaluated in \textit{closed-form}, without solving computationally expensive two-point boundary value problem BVP problem. The resulting real-time B-spline based kinodynamic (RBK) search shares a similar structure and complexity to the position-only search, but outputs dynamically feasible time-parameterized trajectory, which facilitates non-static initial state.

The RBK search is essentially a greedy adaptation of a full-scale B-spline based kinodynamic search. The full-scale search is optimal, but scales poorly with respect to the spline order and number of connections of the grid, as elaborated in Sec.~\ref{sec:kinodynamic_search}. As verified in Sec.~\ref{sec:rbk_performance}, the solution cost (the control cost and time cost) of the RBK seach is between the full-scale kinodynamic search and the position-only shortest path search, and the RBK search is at least two orders of magnitude faster than the full-scale search. To cope with the potential sub-optimality and grid discretization in the RBK search, we propose an elastic optimization (EO) approach as a post-optimization process, which finds the optimal control point placement inside the expanded elastic tube and ensures dynamical feasibility, by solving a Quadratically Constrained Quadratic Programming (QCQP) problem. Thanks to the enforcement of dynamical feasibility during the RBK search, the initial trajectory fed to elastic optimization is already feasible, enhancing the robustness of the post-optimization.

We complete the replanning system by further adopting a receding horizon planning (RHP) framework \cite{liu2017sfc} with a B-spline specification using its local control property. A sliding window optimization strategy\cite{usenko2017bspgradient} is also introduced to bound the complexity of EO approach. We compare our replanning system with two state-of-the-art replanners \cite{Liu2017smp}\cite{oleynikova2016ct} in simulation, and validate our performance on a real monocular vision-based quadrotor as shown in Fig.~\ref{fig:drone}.

\section{System Overview}
\label{sec:overview}
The overview of our autonomous planning system is shown in Fig.~\ref{fig:system overview}. We call RBK search the system \textit{front-end}, and EO approach \textit{back-end}. The RBK search provides an initial placement of control point, which will be refined during the optimization process. The elastic optimization approach consists of two steps, namely, the elastic tube expansion and elastic optimization. There are two modes of our front-end, i.e., passive mode and active mode. For the passive mode, the front-end is activated by collision checking, while in the active mode, the front-end is not only activated by collision checking, but also by a timer depending on the B-spline knot separation. The active mode is used to reject the effects of outliers, when the mapping is noisy in some circumstances. Basically, we work on a series of densely connected B-spline control points, which are constantly modified by the RBK search to avoid collision, and refined by the EO approach.
\section{B-spline Curve and Replanning}
\label{sec:Bspline_property}
\subsection{Local Control Property and Replanning}
We use uniform B-spline as the trajectory parameterization method. The evaluation of B-spline of degree $k-1$ with uniform knot sequence $\{ t_0, t_1, t_2, \ldots, t_n \}$ of fixed time interval $\Delta t$ can be evaluated using the following equation:
\begin{equation}
\mathbf{c}(u) = \sum_{i=0}^{n} \point{v}_i N_{i,k}(u),
\end{equation}
where $u$ is the normalized parameter which can be computed according to $u = (t-t_i)/\Delta t$, for $t\in [t_i, t
_{i+1}]$. $ \point{v}_i \in \mathbb{R}^{3}$ is the control point at time $t_i$ and $N_{i,k}(t)$ is the blending function, which can be computed via the De Boor-Cox recursive formula\cite{qin2000bsplineMatrix}. Following the practice of B-spline which calls $k$ consecutive knots as a \textit{knot span}, we call the corresponding stacked control points as a \textit{span}. For instance, for the knot span $\{ t_0, t_1, t_2, \ldots, t_{k-1} \}$, the corresponding span is defined by $\mathbf{V}_0 \coloneqq \left[\point{v}_0\,\point{v}_{1}\cdots \point{v}_{k-1} \right] ^ {\intercal} \in \mathbb{R}^{k\times3}  $.

An important property of B-spline is the \textit{local control}. Namely, a single span of a B-spline curve is controlled only by $k$ control points, and any control point only affects $k$ spans, as shown in Fig.~\ref{fig:bspline_replan} (a). Mathematically, for the $i$-th span covering $k$ knots from $t_i$ to $t_{i+k-1}$, the corresponding curve and its $l$-th order derivative can be evaluated in closed-form as follows:
\begin{equation}
\label{eq:bspline_derivative}
\frac{d \mathbf{c}_i(u)}{d^{l}u} = \frac{1}{ {(\Delta t)}^l} \frac{d \mathbf{b}^{\intercal}}{d^l u} \mathbf{M}_k \mathbf{V}_i,
\end{equation}
where  $\mathbf{b} = \left[ 1 \, u \, u^2\, \cdots\, u^{k-1} \right]^{\intercal} \in \mathbb{R}^k$ denotes the basis vector,  $\mathbf{M}_k = (m_{i,j})\in \mathbb{R}^{k\times k}$ denotes the basis matrix, where $m_{i,j} = \frac{1}{(k-1)!}{k-1 \choose k-1-i}\sum_{s=j}^{k-1}(-1)^{s-j}{k \choose s-j}(k-s-1)^{k-l-i}$, and $\mathbf{V}_i = \left[\point{v}_i\,\point{v}_{i+1}\cdots \point{v}_{i+k-1} \right] ^ {\intercal} \in \mathbb{R}^{k\times3} $ stacks the $k$ control points of the $i$-th span.

\begin{figure}[t]
	\centering
	\vspace{+0.25cm}
	\includegraphics[width=0.40\textwidth]{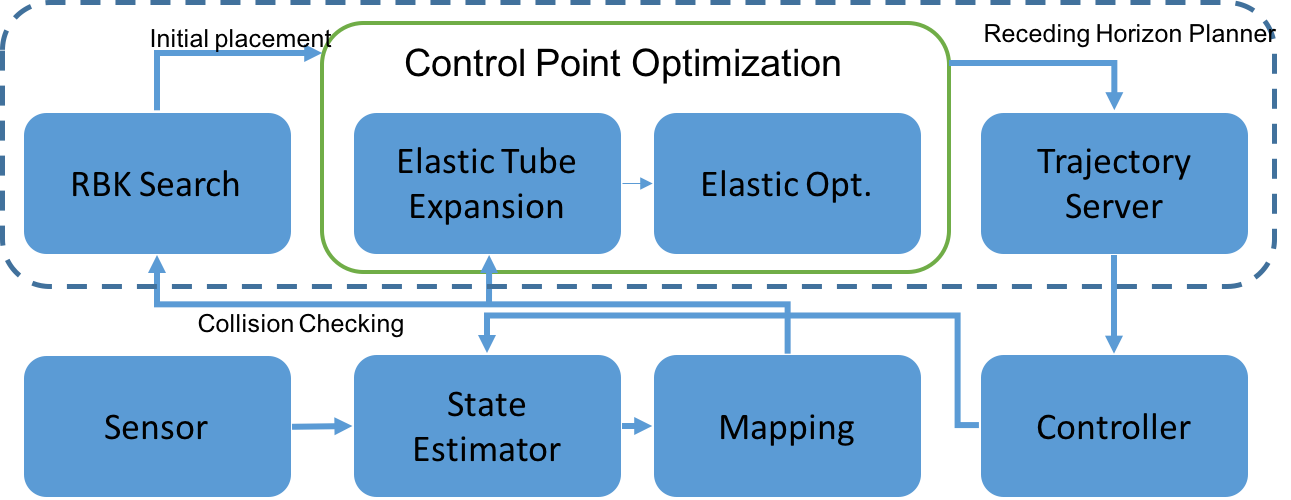}
	\caption{A diagram of our autonomous replanning system.}
	\label{fig:system overview}
	\vspace{-1.1cm}
\end{figure}

\begin{figure}[t]
	\vspace{+0.3cm}
	\begin{center}
		\subfigure[]{\includegraphics[width=0.23\textwidth]{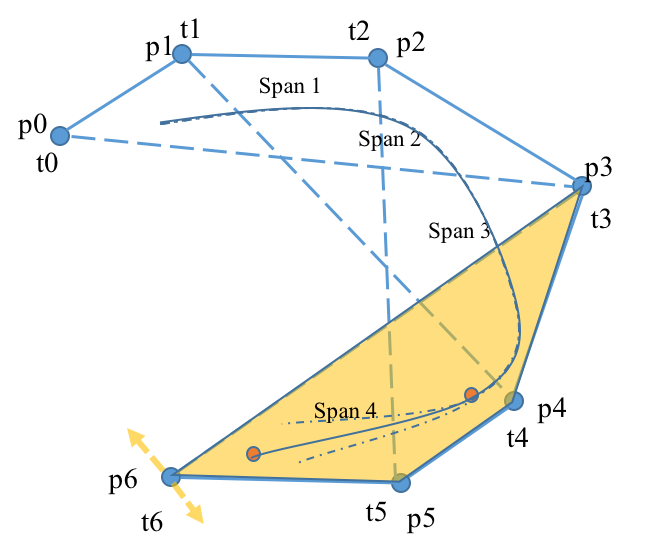}}
		\subfigure[]{\includegraphics[width=0.43\textwidth]{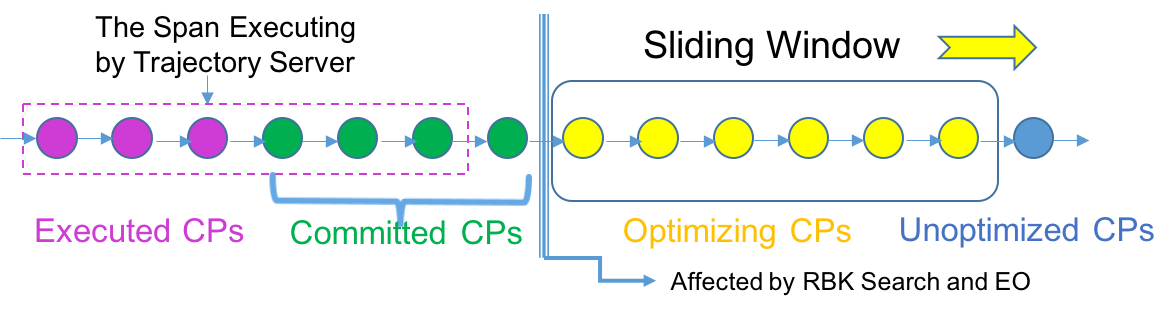}}
	\end{center}
	\vspace{-0.3cm}
	\caption{(a) shows an example of cubic B-spline. $\{p_0,\ldots,p_6\}$ denote the six control points, while $\{t_0,\ldots, t_6\}$ denote the corresponding knots. Adjusting $p_6$ will only affect the trajectory corresponding to span 4. (b) shows how the local control property can be applied to the replanning system.}
	\label{fig:bspline_replan}
	\vspace{-0.65cm}
\end{figure}

We incorporate the local control property into both a kinodynamic search process and a receding horizon planning framework. The usage of local control property in the kinodynamic search is elaborated in Sec.~\ref{sec:kinodynamic_search} and its usage in the receding horizon planning is shown in Fig.~\ref{fig:bspline_replan} (b). The control points are divided into four types, namely, executed control points, committed control points, optimizing control points (the control points inside the sliding window) and unoptimized control points.
The control point is unchangeable once committed. A stopping policy will be activated if a collision is detected for the committed trajectory. The RBK search and EO approach only affect the optimizing and unoptimized control points. As a result, the replacement caused by either the search or optimization will not cause any disturbance to the evaluation at the trajectory server.
\subsection{Convex Hull Property and Dynamical Feasibility}
Another important property of B-spline is the \textit{convex hull} property. Given Eq.~\ref{eq:bspline_derivative}, the dynamical feasibility constraints can be expressed by maximum derivative (velocity, acceleration, jerk, etc.) bounds in terms of control point placement. After applying the \textit{convex hull} property, different order derivatives of the whole B-spline curve can be completely bounded using only linear constraints, as in the following Prop.~\ref{prop:linear_derivative_bound}. The limitation of Prop.~\ref{prop:linear_derivative_bound} is its conservativeness, given that Prop.~\ref{prop:linear_derivative_bound} is a sufficient but not necessary condition.
\begin{prop}
	\label{prop:linear_derivative_bound}
	Given uniform B-spline of order $k-1$ and time step $\Delta t$, there exists a \textbf{constant} linear combination $\mathbf{S} = \mathbf{M}_k^{-1} \mathbf{C}_l \mathbf{M}_k /(\Delta t) ^{l} \in \mathbb{R}^{k \times k}$, such that  $|\mathbf{S}\mathbf{v}_i^{D}| \leq u_{l,D}^{\text{max}}\bm{1}_{k\times 1} $ is a sufficient condition for the derivative along coordinate $D$ to be thoroughly bounded for the whole span, i.e., $ \left( \frac{d c(u)}{d^l u} \right)_D \leq u_{l,D}^{\text{max}}$,$\forall u\in [0,1]$, where $\mathbf{v}_i^{D} \in \mathbb{R}^k$ is the stacked position vector of coordinate $D \in \left\lbrace X, Y, Z \right\rbrace $ in $i$-th span, and $\mathbf{C}_l \in \mathbb{R}^{k\times k}$ is a constant mapping matrix of the $l$-th derivative satisfying $ \frac{d\mathbf{b}}{d^l u} = \mathbf{C}_l \mathbf{b}$.  \footnote{ $|\cdot|$ means the element-wise absolute value of a vector.}
\end{prop}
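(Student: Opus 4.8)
The plan is to reduce the claimed element-wise inequality to the classical convex hull property of the B-spline blending matrix, by re-expressing the $l$-th derivative over the $i$-th span as a convex combination of the entries of the transformed control vector $\mathbf{S}\mathbf{v}_i^{D}$. Starting from the closed-form derivative of Eq.~\ref{eq:bspline_derivative} restricted to a single coordinate $D$, we have $\left(\frac{d c_i(u)}{d^l u}\right)_D = \frac{1}{(\Delta t)^l}\,\frac{d\mathbf{b}^{\intercal}}{d^l u}\,\mathbf{M}_k \mathbf{v}_i^{D}$. The goal is to massage the scalar-valued prefactor $\frac{1}{(\Delta t)^l}\frac{d\mathbf{b}^{\intercal}}{d^l u}\mathbf{M}_k$ into the form $\mathbf{b}^{\intercal}\mathbf{M}_k\,\mathbf{S}$, i.e.\ into the product of a convex-combination row vector $\mathbf{b}^{\intercal}\mathbf{M}_k$ and the constant matrix $\mathbf{S}$ acting on the control points.

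First I would substitute the differentiation identity $\frac{d\mathbf{b}}{d^l u} = \mathbf{C}_l \mathbf{b}$, which transposes to $\frac{d\mathbf{b}^{\intercal}}{d^l u} = \mathbf{b}^{\intercal}\mathbf{C}_l^{\intercal}$, so the prefactor becomes $\frac{1}{(\Delta t)^l}\mathbf{b}^{\intercal}\mathbf{C}_l^{\intercal}\mathbf{M}_k$. Inserting $\mathbf{M}_k\mathbf{M}_k^{-1}=\eye[k]$ to the left of $\mathbf{C}_l^{\intercal}$ lets me pull out the leading factor $\mathbf{b}^{\intercal}\mathbf{M}_k$, leaving exactly $\mathbf{S} = \mathbf{M}_k^{-1}\mathbf{C}_l^{\intercal}\mathbf{M}_k/(\Delta t)^l$; since $\mathbf{S}$ depends only on the fixed order $k$, the derivative order $l$ and the fixed step $\Delta t$, it is constant as claimed. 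This produces the exact identity $\left(\frac{d c_i(u)}{d^l u}\right)_D = \mathbf{b}^{\intercal}\mathbf{M}_k\,(\mathbf{S}\mathbf{v}_i^{D})$, valid for every $u$ and every control-point placement.

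Next I would invoke the convex hull property to close the argument. Over the normalized span $u\in[0,1]$ the row vector $\mathbf{b}^{\intercal}\mathbf{M}_k$ equals the degree-$(k-1)$ blending functions $[\,N_{i,k}(u),\ldots,N_{i+k-1,k}(u)\,]$, which by the De Boor--Cox construction \cite{qin2000bsplineMatrix} are non-negative and sum to one, i.e.\ $\mathbf{b}^{\intercal}\mathbf{M}_k\geq\mathbf{0}$ and $\mathbf{b}^{\intercal}\mathbf{M}_k\,\bm{1}_{k\times1}=1$. Writing $\mathbf{w}\coloneqq\mathbf{S}\mathbf{v}_i^{D}$, the scalar $\left(\frac{d c_i(u)}{d^l u}\right)_D=\sum_j N_{i+j,k}(u)\,w_j$ is then a convex combination of the entries of $\mathbf{w}$, hence sandwiched between $\min_j w_j$ and $\max_j w_j$ and in particular bounded in magnitude by $\max_j|w_j|$. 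Therefore the element-wise hypothesis $|\mathbf{S}\mathbf{v}_i^{D}|\leq u_{l,D}^{\text{max}}\,\bm{1}_{k\times1}$ forces $\left(\frac{d c_i(u)}{d^l u}\right)_D\leq u_{l,D}^{\text{max}}$ for all $u\in[0,1]$; applying this span by span bounds the derivative over the entire curve.

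The hard part will not be the algebra but making the convex-combination step rigorous and reconciling conventions. I must confirm that $\mathbf{b}^{\intercal}\mathbf{M}_k$ genuinely reproduces the non-negative, unity-summing blending functions over the whole normalized interval (not merely at the knots), and I must settle the transpose/indexing convention of $\mathbf{C}_l$ so that the factored matrix coincides with the stated $\mathbf{S}=\mathbf{M}_k^{-1}\mathbf{C}_l\mathbf{M}_k/(\Delta t)^l$. It is also worth pinning down where the conservativeness flagged after the statement enters: the true extremum $\max_u|\sum_j N_{i+j,k}(u)w_j|$ is bounded by the envelope $\max_j|w_j|$, but no single uniform blending function reaches one on the interior, so the envelope strictly overestimates the attained value and the per-entry bound is sufficient but not necessary.
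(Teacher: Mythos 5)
Your proposal is correct and follows exactly the route the paper intends: the paper defers details to its report \cite{ding18replanning}, but its surrounding text (Sec.~III-B) makes clear the proof is precisely this change of basis via the constant matrix $\mathbf{S}$ followed by the convex hull property (non-negativity and partition of unity of $\mathbf{b}^{\intercal}\mathbf{M}_k$ on $u\in[0,1]$), which is what you do. The $\mathbf{C}_l$ versus $\mathbf{C}_l^{\intercal}$ discrepancy you flag is only a row-versus-column convention in the paper's definition $\frac{d\mathbf{b}}{d^l u}=\mathbf{C}_l\mathbf{b}$ (the statement's $\mathbf{S}$ matches if one instead defines $\frac{d\mathbf{b}^{\intercal}}{d^l u}=\mathbf{b}^{\intercal}\mathbf{C}_l$), so it does not affect the existence of the constant $\mathbf{S}$ or the validity of your argument.
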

\begin{proof}
Interested readers may refer to our report \cite{ding18replanning} for detailed proof.
\end{proof}

As described by Mellinger\cite{mellinger2011minsnap}, the control cost of quadrotor trajectory can be expressed by the integral over squared derivatives (such as fourth derivative snap), which can also be evaluated in closed form in the case of uniform B-spline. The total control cost $E_c$ of the $i$-th span can be expressed by a weighted sum of the integral over squared derivatives of different orders as follows:
\begin{equation}
\label{eq:control_cost}
E_i^c = \sum_{l=1}^{k-2} \int_{0}^{1}   w_l \left( \frac{d c_i(u)}{d^{l}u} \right)^2 du
= \sum_{l=1}^{k-2} w_l \mathbf{V}_{i}^{\intercal} \mathbf{M}_k^{\intercal} \mathbf{Q}_l \mathbf{M}_k \mathbf{V}_i
\end{equation}
where $\mathbf{Q}_l = \int_{0}^{1} \left( \frac{d \mathbf{b}}{d^l u} \right) \left( \frac{d \mathbf{b}}{d^l u} \right)^{\intercal} du /(\Delta t)^{2l-1}$ is the Hessian Matrix of the $l$-th squared derivative, which is constant for uniform B-spline, and $w_l$ is the corresponding weight. Without special mention, our replanning system uses quintic uniform B-spline ($k=6$) to ensure the continuity up to snap for quadrotor systems.

\section{B-spline Based Kinodynamic Search}
\label{sec:kinodynamic_search}
\subsection{Basics of B-spline Based Kinodynamic Search}
In this section, we introduce the basics of B-spline based kinodynamic search. A uniform distributed $M$-connect spatial grid with $N$ cells is chosen as the graph structure $\mathcal{G}$, with cell centers as vertices $\mathcal{V}$, and their connections as edges $\mathcal{E}$, i.e., $\mathcal{G} \coloneqq ( \mathcal{V},\mathcal{E})$. For instance, a 3-D grid with each cell connected to its neighbors is 26-connect. Two spans $\mathbf{V}_i$ and $\mathbf{V}_j$ are said to be neighboring spans if and only if the last $k-1$ control points of $\mathbf{V}_i$ coincide with the first $k-1$ control points of $\mathbf{V}_j$. We define the initial system state as a given span $\mathbf{V}_{\text{init}} \coloneqq \left[\point{v}_0, \point{v}_1, \ldots \point{v}_{k-1}  \right]^{\intercal}$ containing the first $k$ control points. In the same way, goal state $\mathbf{V}_{\text{goal}}$ can be defined. Note that the initial and goal state we define are slightly different from those in traditional kinodynamic planners \cite{webb2013kinodynamic}, given the fact that $\mathbf{V}_{\text{init}}$ and $\mathbf{V}_{\text{goal}}$ actually represent two short trajectories. The B-spline based kinodynamic search problem is finding a set of neighboring spans $ \mathcal{S} \coloneqq \left\lbrace \mathbf{V}_0,\ldots,\mathbf{V}_{n-k + 1}\right\rbrace$ that minimizes the following cost function,
\vspace{-0.2cm}
\begin{equation}
\begin{aligned}
& \underset{ \mathcal{S} }{\text{min}} & & J(\mathcal{S}) = \sum_{i=0}^{n-k+1} E_i^c + \lambda (n+1)\Delta t \\
\end{aligned}
\label{prob:optimal_search}
\end{equation}
\vspace{-0.05cm}
where $\lambda \geq 0$ is the weight of total time cost. The \textit{dynamical feasibility} constraints can be checked in terms of span. The \textit{collision-free} constraints are enforced by choosing control point placement in unoccupied cells. Later we will discuss how to ensure the safety of resulting trajectory based on this.

To solve the graph search problem optimally, we need a \textit{full-scale} search in terms of B-spline spans. The complexity of the full-scale search depends on the number of all possible B-spline span patterns. Typically, searching for $k-1$-order B-spline spans on an $M$-connected grid will induce an $M$-connected span graph $\mathcal{G}^{'}$ with a scaling factor $O(M^{k-1})$ of the number of vertices, if no pruning is applied. For instance, for 5-order B-spline search on a 26-connect 3-D grid, the vertex scaling factor is almost $10^7$, which is unacceptable.

\subsection{Real-time B-spline Based Kinodynamic Search}
\label{sec:rbk}
Given the high complexity of the full-scale search, we present the RBK search algorithm, which transforms the low-complexity position-only search into an efficient kinodynamic search, by exploring three properties of B-spline, namely, \textit{local control} property, \textit{convex hull} property and \textit{closed-form evaluation} of control cost. The three properties exactly correspond to the three core operations in the  search process, namely, span retrieving and expansion, dynamical feasibility checking and cost evaluation. The RBK algorithm shares a similar structure and complexity to position-only search algorithms such as A* and Dijkstra, but obtains the ability of evaluating the control cost and ensuring the dynamical feasibility. The result of the RBK search is dynamically feasible time-parameterized trajectory instead of unparameterized path, thus being different from position-only search. Denote open set and closed set as $\mathcal{O}$ and $\mathcal{I}$, and current grid node and neighboring grid node as $p_\text{cur}$ and $p_\text{nbr}$.

\begin{figure}[t]
	\removelatexerror
	\begin{algorithm}[H]
		\label{algo:RBK}
		\caption{ $\func{RBK} (\mathbf{V}_{\text{init}},\mathbf{V}_{\text{goal}}, \set{G}   )$ }
		Initializes: $\mathcal{O} = \mathcal{I} = \emptyset$\;
		$\func{Add}(\mathcal{O} , p_{\text{new}}, \func{Cost}(\mathbf{V}_{\text{init}}))$ \;
		\While(){ $\func{size} (\mathcal{O} ) != 0$ }
		{
			$ p_{\text{cur}} \leftarrow \func{PopMin}(\mathcal{O} )$,	$ \func{Add}(\mathcal{I}, p_{\text{cur}})$ \;
			$(\mathbf{V}_\text{cur},c_{\text{cur}}) \leftarrow \underline{ \func{RetrieveSpan} }(p_{\text{cur}}) $ \;
			\If{$\func{NearEnd}(\mathbf{V}_\text{cur}, \mathbf{V}_\text{goal})$}
			{
				\textbf{return success} \;
			}
			\For(){$ \mathbf{V}_{\text{nbr}} \in \func{Neighbor}(\mathbf{V}_\text{cur}) \cap p_{\text{nbr}} \notin \mathcal{I} $}
			{
				\If{ $\underline{\func{CheckDynamics} }(\mathbf{V}_{\text{nbr}})$}
				{
					$ c^{\prime} = \underline{ \func{Cost} }(\mathbf{V_{\text{nbr}}}) + c_{\text{cur}}$ \;
					\eIf{$ p_{\text{nbr}} \in \mathcal{O}  $}
					{
						\If{$c^{\prime} < \func{RetrieveCost}(p_{\text{nbr}})$}
						{
							$ \func{Update} (p_{\text{current}}, p_{\text{nbr}}) $ \;
						}

					}{
					$\scriptstyle \func{Add}(\mathcal{O} , p_{\text{nbr}},c^{\prime} + \func{HeuristicCost}(\mathbf{V}_{\text{nbr}}) )$ \;
				}
			}
		}

	}
\end{algorithm}
\vspace{-1.3cm}
\end{figure}

\underline{\textit{Span retrieving and expansion}}: the position-only search algorithms such as A* and Dijkstra implicitly maintain a tree structure via the predecessor data structure. Instead of retrieving one predecessor as in the position-only search, the RBK search retrieves $k-1$ predecessors. By stacking these $k-1$ predecessors with current control point, the current span is formed. The retrieving process is implemented in function $\func{RetrieveSpan}(p_{\text{cur}})$ in Alg.~\ref{algo:RBK} and is shown in Fig.~\ref{fig:rbk_flow} (a). Thanks to the local control property, the expansion can be done locally just like position-only search, since there is no need to re-evaluate of the whole trajectory when expanding to new control points. Therefore, using local control property, the tree structure in A* and Dijkstra can be transformed into a tree of spans, enabling B-spline based search.

\underline{\textit{Dynamical feasibility checking}}: using the \textit{convex hull} property, we derive a sufficient condition for dynamical feasibility as introduced in Sec.~\ref{sec:Bspline_property}, which is actually linear. By using this condition in function $\func{CheckDynamics}(\mathbf{V}_{\text{nbr}})$, dynamical feasibility can be \textit{guaranteed} without computation overhead, which speeds up the kinodynamic search process.

\underline{\textit{Cost evaluation}}: Traditional kinodynamic planners \cite{webb2013kinodynamic} rely on solving the two-step boundary value problem (BVP) to determine the cost of connecting two states, which is computationally expensive. By using B-spline parameterization, the cost of connecting to new expanded span can be evaluated using closed-form solution according to Eq.~\ref{eq:control_cost}, which is implemented in function $\func{Cost}(\mathbf{V_{\text{nbr}}})$ in Alg.~\ref{algo:RBK}. The limitation of our method is that the expansion of spans are restricted by the resolution and connection of the grid. For example, for 26-connect 3-D grid, the connection to the next control point is restricted to the 26 connections, resulting in limited representations of B-spline trajectories.

\begin{figure}[t]
	\vspace{+0.0cm}
	\centering
	\subfigure[]{\includegraphics[trim={0cm 0cm 0cm 0cm},clip,width=0.21\textwidth]{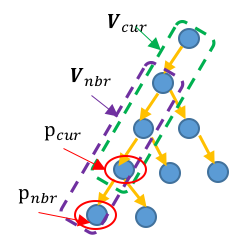}}
	\subfigure[]{\includegraphics[trim={0cm 0cm 0cm 0cm},clip,width=0.23\textwidth]{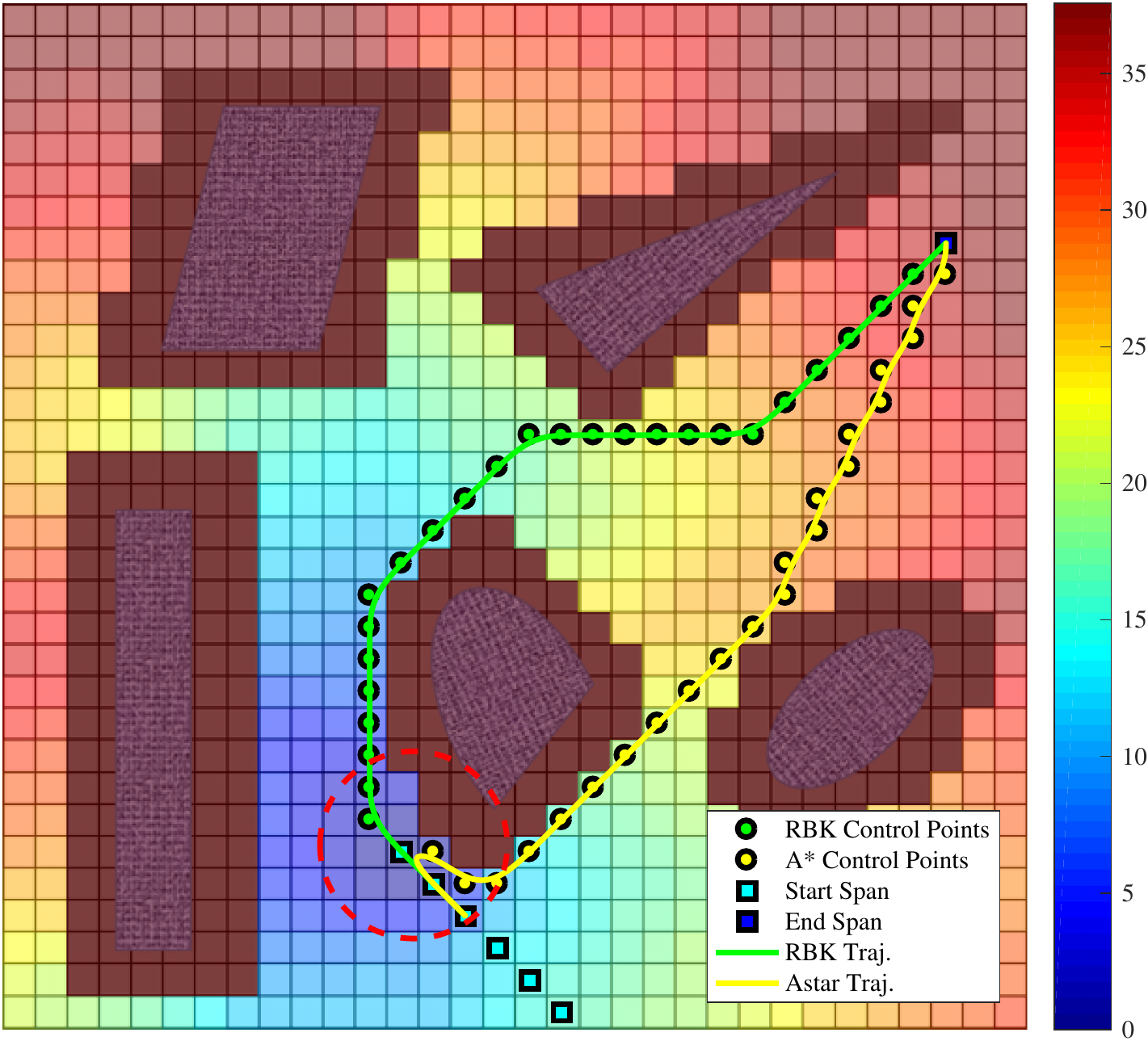}}
	\vspace{-0.2cm}
	\caption{Illustration of the RBK search. (a) shows the span expansion process (Sec.\ref{sec:rbk}), and (b) shows the result of the RBK search and the comparison with the position-only A* search.}
	\label{fig:rbk_flow}
	\vspace{-0.8cm}
\end{figure}

The trajectory of the RBK search can be \textit{guaranteed} to be collision free if moderate obstacle inflation is done. The analysis of how much inflation is needed is shown in our report \cite{ding18replanning}, which means that the RBK search can be used as a standalone component without any post-processing, when computation is limited. The termination condition is implemented in $\func{NearEnd}(\mathbf{V}_\text{cur}, \mathbf{V}_\text{goal})$ in Alg.~\ref{algo:RBK}. In practice, we terminate the search when the first control point of end span is reached and append the end span to it, since we observe that the part of dynamical profile can be automatically smoothed by the back-end. To speed up the search process, we use $\func{HeuristicCost}(\mathbf{V}_{\text{nbr}})$ to focus on searching in the most promising direction. In practice, one admissible heuristic can be $\lambda \cdot  \func{norm}(p_{\text{nbr}} - p_{\text{goal}})/v_\text{max}$, where $p_{\text{goal}}$ is the first control point of end span. The design of more informative heuristic function is left as a future work.

As shown in Fig.~\ref{fig:rbk_flow} (b), based on the start span, the RBK search explores the low-cost region and generates dynamically feasible time-parameterized trajectory, while A* finds a shortest but unparameterized path, which does not contain any kinodynamic information. If we directly use the A* shortest path as B-spline control points placement, the resulting trajectory has large acceleration at the beginning, which is not good for replanning. As verified in Sec.~\ref{sec:rbk_performance}, the RBK search generally finds better trajectories in terms of the control cost and time cost, compared to the position-only A*. Moreover, the dynamical feasibility is guaranteed for the RBK search. As verified in Sec.~\ref{sec:analysis} and Sec.~\ref{sec:experimental}, Alg.~\ref{algo:RBK} only needs around 20 milliseconds to reach a good solution.


\section{Elastic Optimization}
\label{sec:elastic_optimization}
To compensate for the potential sub-optimality of the RBK search and the limitation induced by the discretization of the grid, we present the post-optimization process, i.e., the EO approach, to fully utilize free space and further reduce the trajectory cost. Since the initial trajectory provided by the RBK search is already dynamically feasible and time-parameterized, the post-optimization is equipped with at least one feasible solution, which makes the optimization process robust. The EO approach can be divided into two components, namely, an elastic tube expansion algorithm and an elastic optimization formulation. The elastic tube expansion algorithm generates a series of connected local maximum volume ``bubbles'', i.e., an elastic tube. The elastic optimization is to stretch the trajectory (so-called ``elastic band'') inside the tube, so that the objective is minimized.

\subsection{Elastic Tube Expansion}
\label{sec:elastic_expansion}
We denote the set of control point placement provide by Alg.~\ref{algo:RBK} as $\mathcal{P} \coloneqq \left\lbrace \point{p}_0, \point{p}_1, \ldots, \point{p}_m   \right\rbrace $ as the initial placement, excluding the first $k-1$ control points of the start span, and the last $k-1$ control points of the end span. As shown in Alg.~\ref{algo:elastic_tube}, the elastic tube expansion algorithm can be divided into two steps: first, we construct the initial tube, by doing radius search for the initial placement $\set{P}$ and get the nearest obstacle position $\mathbf{n}_i$. Second, we push the center of the bubbles in the direction $\overrightarrow{f}$ (away from the nearest obstacle), while satisfying the criterion that the new bubble contains the original bubble, as required by condition $\func{abs} (r_i^{\prime} - d - r_i )\leq d_{\text{thres}}$, as shown in Fig.~\ref{fig:eo_pipeline} (a). The inflation process is implemented in a binary search manner to reduce calling $ \func{NNSearch}$.  Alg.~\ref{algo:elastic_tube} will finally find a series of local maximum volume bubble centers $\set{Q} \coloneqq \left\lbrace \point{q}_0, \point{q}_1,\ldots,\point{q}_m \right\rbrace  $ based on the initial tube $\set{P}$. As for the parameter settings,  $d_\text{infl}^{\max}$ and $ d_\text{infl}^{\min}$ are the maximum and minimum inflation distance respectively; $d_{\text{thres}}$ is the threshold for checking whether the new bubble contains the original one, and should be set to a small value, e.g., less than the map resolution; and $d_\text{infl}^{\text{tol}}$  is the binary search end condition, which can be set to the resolution of the map. The function $\func{NNSearch}$ is the nearest neighborhood search which can be done efficiently if a KD-tree is maintained. The actual run time of Alg.~\ref{algo:elastic_tube} is around 2 millisecond, as verified in Section.~\ref{sec:analysis}.

\begin{figure}[t]
	\removelatexerror
	\setlength{\textfloatsep}{0pt}
	\begin{algorithm}[H]
		\label{algo:elastic_tube}
		\caption{ $(\set{Q}, \set{R}^{\prime}) \leftarrow \func{ElasticTube} (\set{P}, \set{C}^{\text{ELAS}} )$ }
		Initializes: $d_\text{infl}^{\min}, d_\text{infl}^{\max}, d_{\text{thres}}.\, \set{R} = \set{R^{\prime}} = \set{Q} = \emptyset$ \;
		\For{ $\point{p}_i \in \set{P}$ }
		{
			$(\point{n}_i,r_i)\leftarrow \func{NNSearch}(\point{p}_i,\set{C}^{\text{ELAS}})$\;
			$\overrightarrow{f} = (\point{p}_i - \point{n}_i)/\func{norm}(\point{p}_i - \point{n}_i) $ \;
			\While{ $d_\text{infl}^{\max} - d_\text{infl}^{\min} > d_\text{infl}^{\text{tol}}$ }
			{
				$d \leftarrow \left( d_\text{infl}^{\max} + d_\text{infl}^{\min} \right) /2 $, $\point{p}_{i,\text{infl}}  \leftarrow \point{p}_i + d \cdot \overrightarrow{f} $   \;

				$(\point{n}_i^{\prime},r_i^{\prime})\leftarrow \func{NNSearch}(\point{p}_{i,\text{infl}},\set{C}^{\text{ELAS}})$\;
				\eIf{ $\func{abs} (r_i^{\prime} - d - r_i ) > d_{\text{thres}} $}
				{
					$d_\text{infl}^{\max}  \leftarrow d $ \;
				}{
				$d_\text{infl}^{\min}  \leftarrow d $ \;
			}
		}
		$\mathcal{Q} \leftarrow \mathcal{Q} \cup \point{p}_{i,\text{infl}}$, $\set{R}^{\prime} \leftarrow \set{R}^{\prime} \cup r_i^{\prime}$ \;

	}
\end{algorithm}
\vspace{-1.2cm}
\end{figure}

\subsection{Elastic Optimization Formulation}
\label{sec:elastic_opt_formulation}
We are inspired by the elastic smoothing method for car-like robots in \cite{zhu2015ces}. The key insight of \cite{zhu2015ces} is the convex representation of geometric smoothness and enforcement of speed feasibility. However, the formulation in \cite{zhu2015ces} is completely based on car model. In contrast to \cite{zhu2015ces} which worked on a tube limited by the initial path, we use Alg.~\ref{algo:elastic_tube} to generate a local maximum volume elastic tube to enhance the optimization performance. Furthermore, we represents the control cost and dynamical feasibility constraints in closed-form using the high-order B-spline parameterization for controlling quadrotors.

We denote $\left\lbrace \point{x}_0, \point{x}_1, \ldots, \point{x}_{m} \right\rbrace $ as the control point placement (optimization variable). We denote the stacked spans as $\left\lbrace  \mathbf{V}_i | i\in \left\lbrace -k+1,-k+2,\ldots m \right\rbrace \right\rbrace  $ which also contain the hybrid spans which consist of optimization variable and fixed start span or end span control points. The optimization problem can be expressed as follows:
\begin{equation}
\begin{aligned}
& \text{min} & & \sum_{i=-k+1}^{m} E_i^c\\
& \text{s.t} 						   & & \point{x}_0 = \point{p}_0, \point{x}_m = \point{p}_m\\
&								       & & \left\Vert \point{x}_i - \point{q}_m \right\Vert_2 \leq r_i^{\prime}, & \forall i \in \left\lbrace 2,\ldots, m-1 \right\rbrace\\
&									   & &|\mathbf{S}\mathbf{v}_i^{D}| \leq u_{l,D}^{\text{max}}\bm{1}_{k\times 1}, &\forall i, D, l\in \left\lbrace 1,\ldots k-2 \right\rbrace  \label{prob:ces_opt}
\end{aligned}
\end{equation}
where the first constraint is the position constraint of the first and last control point to maintain the connectivity. The second constraint is to restrict the control point inside the safety space, which is quadratic. The third constraint is to ensure the dynamical feasibility using the sufficient condition in Prop.~\ref{prop:linear_derivative_bound}, which is linear. As a whole, the formulation is a QCQP. As verified in \ref{sec:analysis}, the EO approach can be completed in around 30 milliseconds.

\begin{figure}[t]
	\begin{center}
		\subfigure[]{\includegraphics[trim={0cm 0cm 0cm 0cm},clip,width=0.233\textwidth]{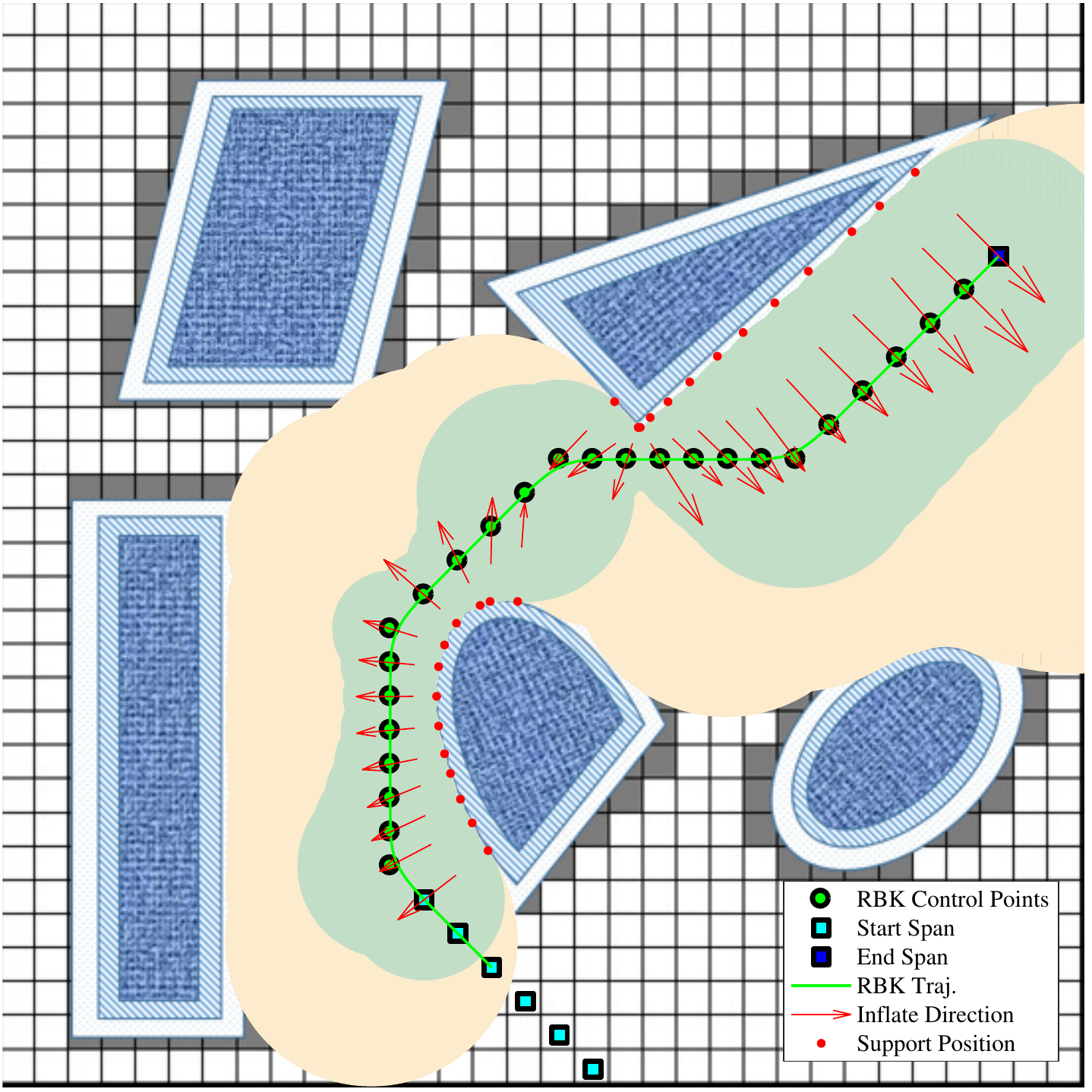}}
		\subfigure[]{\includegraphics[trim={0cm 0cm 0cm 0cm},clip,width=0.233\textwidth]{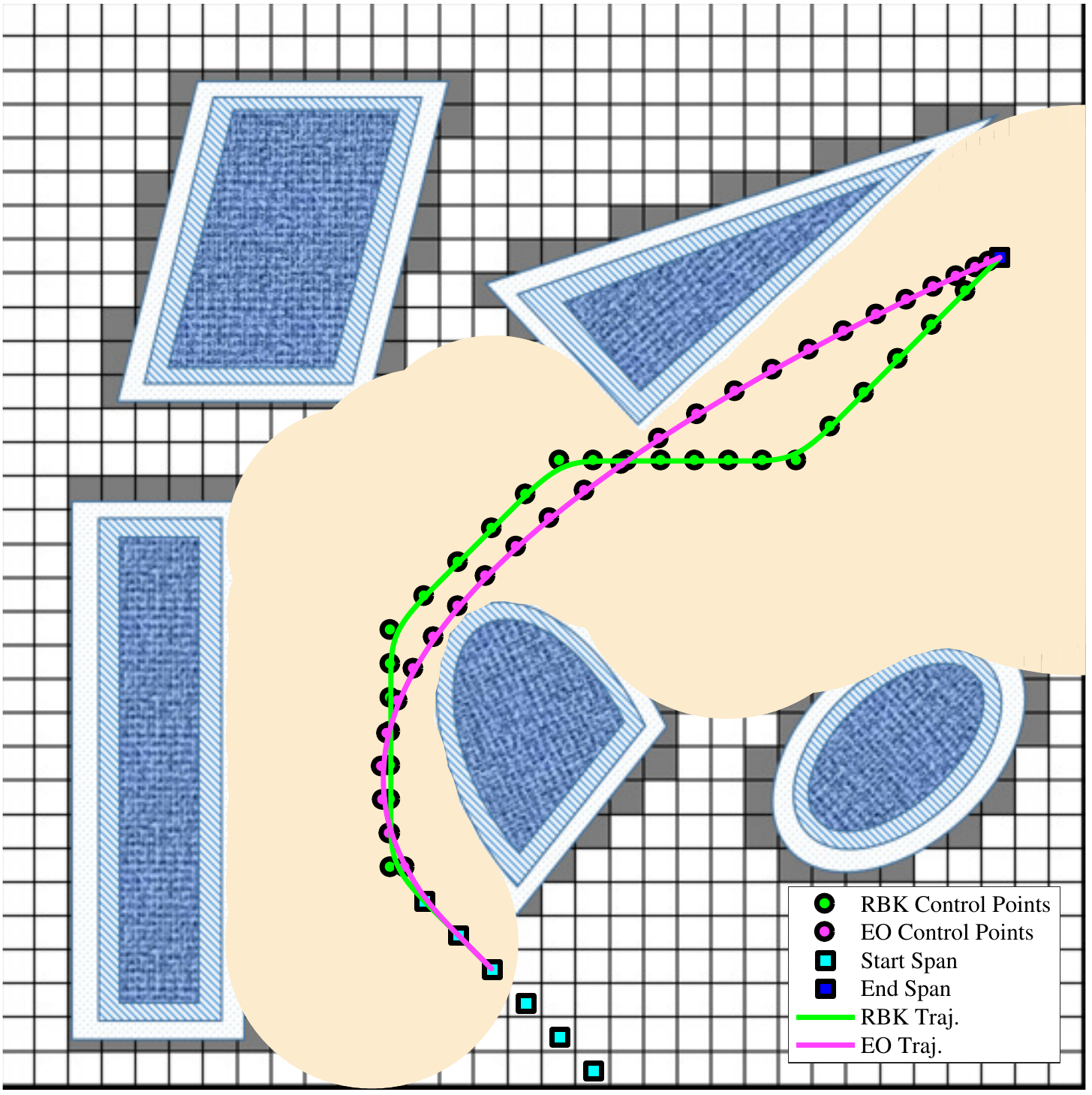}}
	\end{center}
	\vspace{-0.35cm}
	\caption{Illustration of elastic optimization approach. (a) shows the elastic tube expansion process (Sec.~\ref{sec:elastic_expansion}). (b) shows the elastic optimization process (Sec.~\ref{sec:elastic_opt_formulation}).}
	\label{fig:eo_pipeline}
	\vspace{-1.1cm}
\end{figure}

\subsection{Enforcing Safety Guarantee}
\label{sec:safety_guarantee}
One issue of the elastic optimization formulation is that restricting control points inside the bubbles is not a sufficient condition for safety, since 1) bubbles are placed with finite density and 2) B-spline does not exactly pass the control points. The issue can be resolved in two steps: first we show that piece-wise linear segments connecting the control points can be completely contained in collision-free space using a two-level obstacle inflation strategy; and second, the B-spline trajectory can be pulled arbitrarily closed to the piece-wise linear segments using an iterative process of adding control points.
The two-level obstacle inflation strategy is as follows: we model the robot as a ball with radius $\delta_r$, and we search in configuration space $\mathcal{C}^{\text{RBK}}$ with a larger minimum clearance $c_{\min}^{\text{RBK}}$, while we generate the elastic tube and optimize the control points in the configuration space $\mathcal{C}^{\text{ELAS}}$ with a smaller minimum clearance $c_{\min}^{\text{ELAS}}$ as shown in Fig.~\ref{fig:eo_pipeline}.
Based on the given grid size of RBK search, the maximum distance between control points $d_\text{max}$ can be computed. By the two-level inflation, we could have $2(c_{\min}^{\text{RBK}} - c_{\min}^{\text{ELAS}}) > d_\text{max}$, which is used to ensure the connectivity of the initial tube. Based on this, a sufficient condition for the piece-wise linear segments to be completely contained inside the elastic tube is that $c^{\text{ELAS}}_\text{min} - \delta_r> \frac{\sqrt{2} - 1}{2} d_\text{max}$ \footnote{Interested readers may also refer to our report \cite{ding18replanning} for detailed proof.}. Then, if the collision is caused by the deviation from the piece-wise linear segments, control points are added till the collision is resolved. Thanks to the local control property, this can be done with high efficiency without re-evaluation of the whole trajectory. In the case of collision, we can always \textit{guarantee} the success of iterative process by realizing that if the same $k-1$ control points are added to one control point, the robot will exactly pass through that control point. The linear segment can be closely followed by adding control points on both sides. \footnote{The strategy is to guarantee collision-free for the post-processing.}
\section{Analysis}
\label{sec:analysis}
We begin with an individual test for the RBK search, then we test the replanning system. A set of default parameters is used for all the simulation: 1) Maximum velocity and acceleration are set to  $3.5 m/s$ and $5.2 m/s^2$ respectively. Accordingly, $u^{\text{max}}_{1,D} = 2.0 m/s$  (velocity bound for each axis) and  $u^{\text{max}}_{2,D} = 3.0 m/s^2$. 2)The B-spline order is set to $5$, and the time step $\Delta t$ is set to $0.15s$. 3) The control cost in Eq.\ref{eq:control_cost} is set to minimizing snap (only $w_4$ being non-zero). The weight of total time cost $\lambda$ in Eq.~\ref{prob:optimal_search} is set based on the criterion that the time cost and the control cost are at the same order for each span. 4) For the replanning system, the RBK search is implemented on a $60 \times 60 \times 20$ uniform grid, with an actual scale of $10m \times 10m \times 3m$. The sliding window size is set to $12$, and the local sensing range is set to $4m$.

\subsection{Kinodynamic Search Performance}
\label{sec:rbk_performance}
To verify the performance of the RBK search, we compare it with the full-scale B-spline based kinodynamic search and position-only A* path search. Since the full-scale B-spline based kinodynamic search is of high complexity, the comparisons are done on a relatively small grid of $14 \times 14$.  We can not directly compare our method with the position-only A* search according to the cost function defined in Eq.~\ref{prob:optimal_search}, since the path returned by the position-only search does not contain any kinodynamic information. Hence, we assume that the position-only search results are directly parameterized using B-spline, to illustrate the performance if a position-only search is used as the system front-end.

\begin{figure}[htb]
	\vspace{+0.5cm}
	\begin{center}
		\subfigure[]{\includegraphics[trim={0cm 0cm 0cm 0cm},clip,width=0.41\columnwidth]{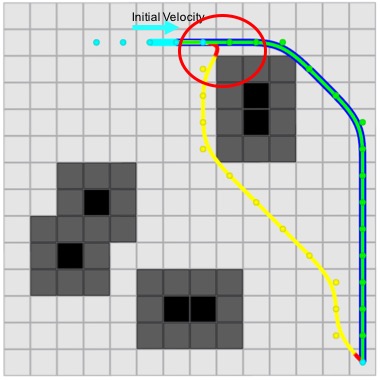}}
		\subfigure[]{\includegraphics[trim={0cm  0cm
				0cm 0cm},clip,width=0.567\columnwidth]{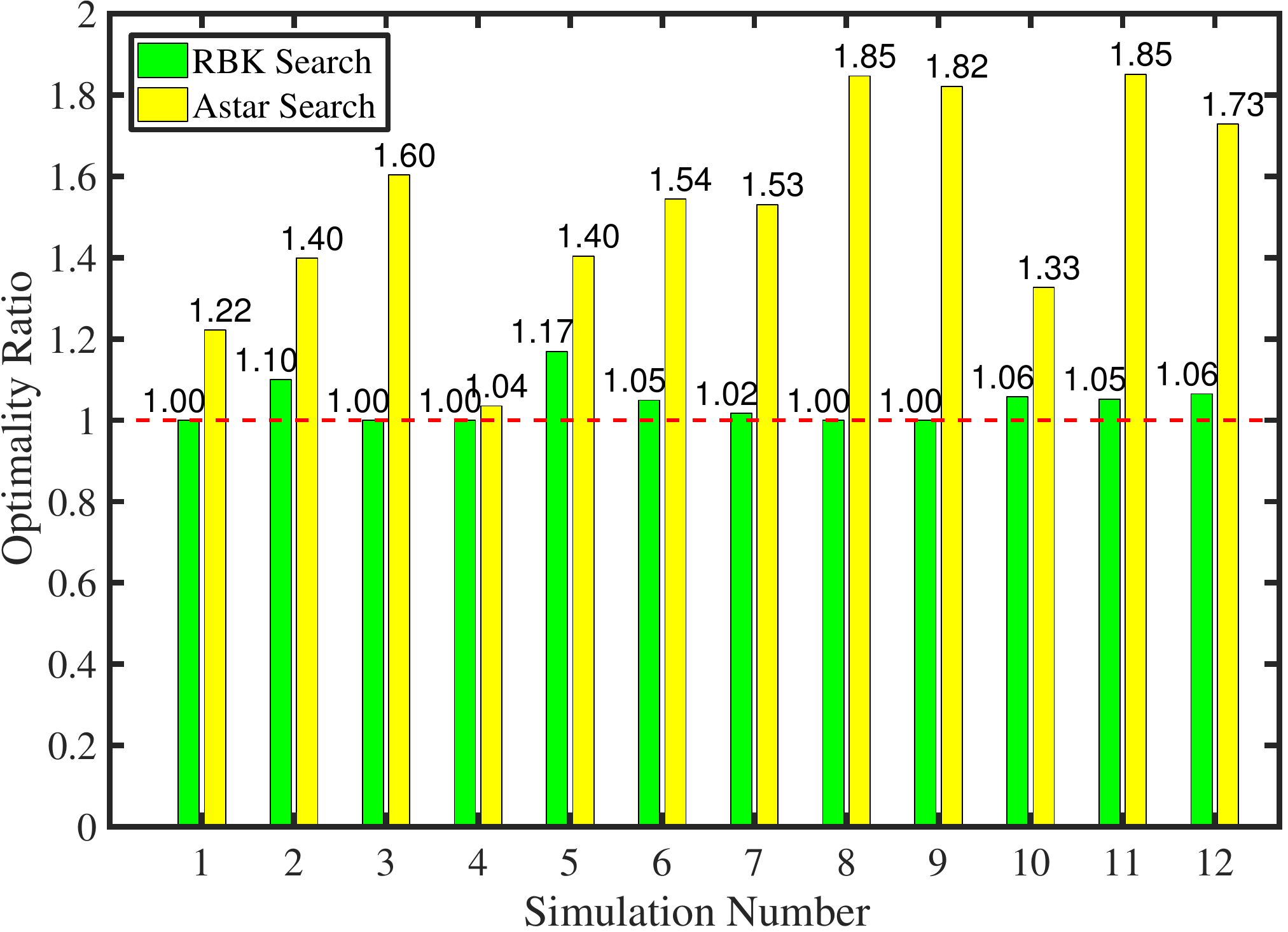}}
	\end{center}
	\vspace{-0.3cm}
	\caption{Illustration of the RBK search performance and comparisons. In (a), the start span (in cyan) has a non-zero initial velocity pointing to the right. The end span (in cyan) is at the same grid representing a static state. The path of the position-only A* search (in yellow) and dynamically infeasible parts (in red) are shown. The results of the full-scale B-spline based search and the RBK search are marked in blue and green respectively.}
	\label{fig:kinob_test}
	\vspace{-1.15cm}
\end{figure}

As shown in Fig.~\ref{fig:kinob_test} (a), if the position-only A* shortest path is used as the control point placement, there are dynamically infeasible parts as marked in red. Our RBK search and the full-scale B-spline based search both \textit{guarantee} dynamical feasibility, and in this case our RBK method actually finds the optimal solution. We further conduct Monte-Carlo testing by randomly choosing the start state (random location and random pattern of the span) and the obstacle location. As shown in Fig.~\ref{fig:kinob_test} (b), we demonstrate the optimality ratio statistics (i.e., total cost divided by the optimal cost given by the full-scale search). The RBK search finds a lower cost solution than position-only A* search for all 12 rounds and reaches the optimal solution for five rounds. As for the time efficiency, the average computing time for the position-only A*, the RBK and full-scale search are $0.0003s$, $0.0035s$ and $7.29s$, respectively. Therefore, the RBK search is about three orders of magnitude faster than the full-scale search even on the small grid. Note that more efficient heuristics can be developed to improve the RBK performance and characterize the optimality gap, and this will be left as a future work.

\subsection{Run Time Analysis of Replanning System}
\label{sec:run_time_analysis}
In this section, we test our replanning system on two different maps, including the random map and Perlin noise map\footnote{https://github.com/HKUST-Aerial-Robotics/mockamap}. The Perlin noise map is a complex 3-D world, as shown in Fig.~\ref{fig:runtime_sim}(b). We evaluate the run time efficiency of our replanning system and list the statistics of all components as shown in Table.~\ref{tab:runtime_analysis}. The overall trajectory illustrating the whole round trip is shown in Fig.~\ref{fig:runtime_sim}.
\begin{table}[h]
	\caption{Run Time Analysis on Different Maps}
	\label{tab:runtime_analysis}
	\resizebox{\columnwidth}{!}{
		\begin{tabular}{ |c|c|c|c|c|c|c|c|c|}
			\hline
			\textbf{Maps}    &
			\makecell{$\#$ \\ \textbf{Replans} }&
			\textbf{Time(s)} &
			\textbf{\makecell{RBK\\Search}} &
			\makecell{$\#$ \\ \textbf{Opt.}}&
			\textbf{Time(s)} &
			\textbf{\makecell{Tube\\Expan.}} &
			\textbf{\makecell{Traj.\\Opt.}} &
			\textbf{\makecell{Total\\ Opt. } } \\
			\hline
			\makecell{Random Map\\ ( 0.25 pillars/$m^2$ ) }  &  76  & \makecell{Avg\\Max\\Std}  & \makecell{\textbf{0.017}\\0.049\\0.010} & 993 &\makecell{Avg\\Max\\Std}  & \makecell{\textbf{0.002}\\0.009\\0.001} & \makecell{\textbf{0.021}\\0.043\\0.010}  & \makecell{\textbf{0.023}\\0.044\\0.010}\\
			\hline
			Perlin Map  &  19  & \makecell{Avg\\Max\\Std}  & \makecell{\textbf{0.014}\\0.026\\0.006} & 1044 &\makecell{Avg\\Max\\Std}  & \makecell{\textbf{0.002}\\0.008\\0.001} & \makecell{\textbf{0.028}\\0.058\\0.010}  & \makecell{\textbf{0.030}\\0.061\\0.011}\\
			\hline
		\end{tabular}
	}
	\vspace{-0.2cm}
\end{table}

As we can see from Tab.~\ref{tab:runtime_analysis}, on the random map, the RBK method consumes an average computing time of $0.017s$ with a standard deviation of $0.01s$. The elastic tube expansion method can be done in $0.002s$, which is much faster than traditional free space segmentation methods such as IRIS \cite{deits2015iris}, which may take up to $0.1s$ to find a solution. The elastic optimization can be done in $0.021s$. On the Perlin noise map, which contains unstructured 3D obstacles, our method has a similar performance as on the random map, showing that our method works well in complex 3-D environments.
\begin{figure}[t]
	\begin{center}
		\subfigure[Replan on the random map]{\includegraphics[trim={13.5cm 2cm 15cm 0cm},clip,width=0.20\textwidth]{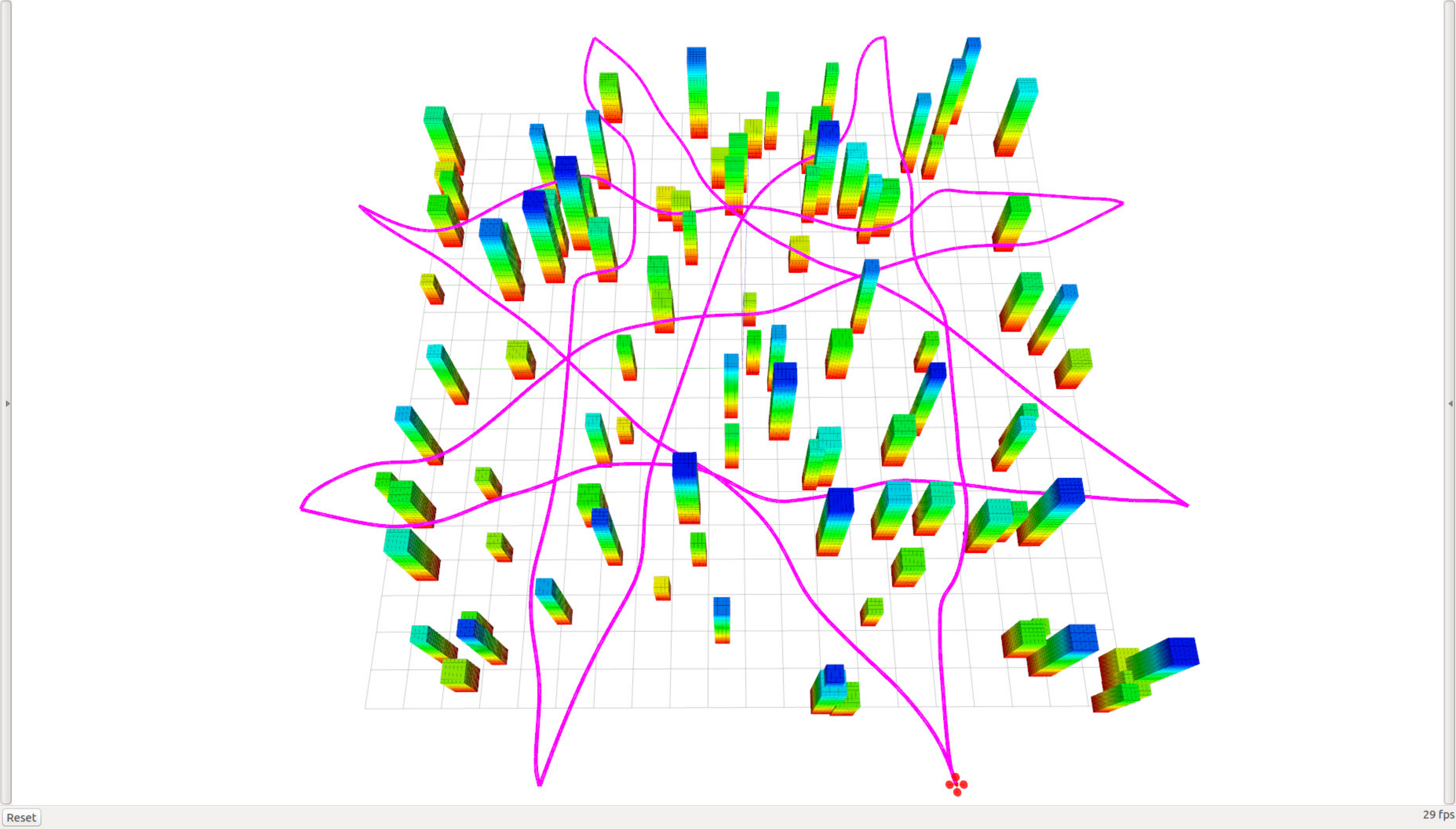}}
		\subfigure[Replan on the perlin noise map]{\includegraphics[trim={10cm 1.5cm 14cm 1cm},clip,width=0.22\textwidth]{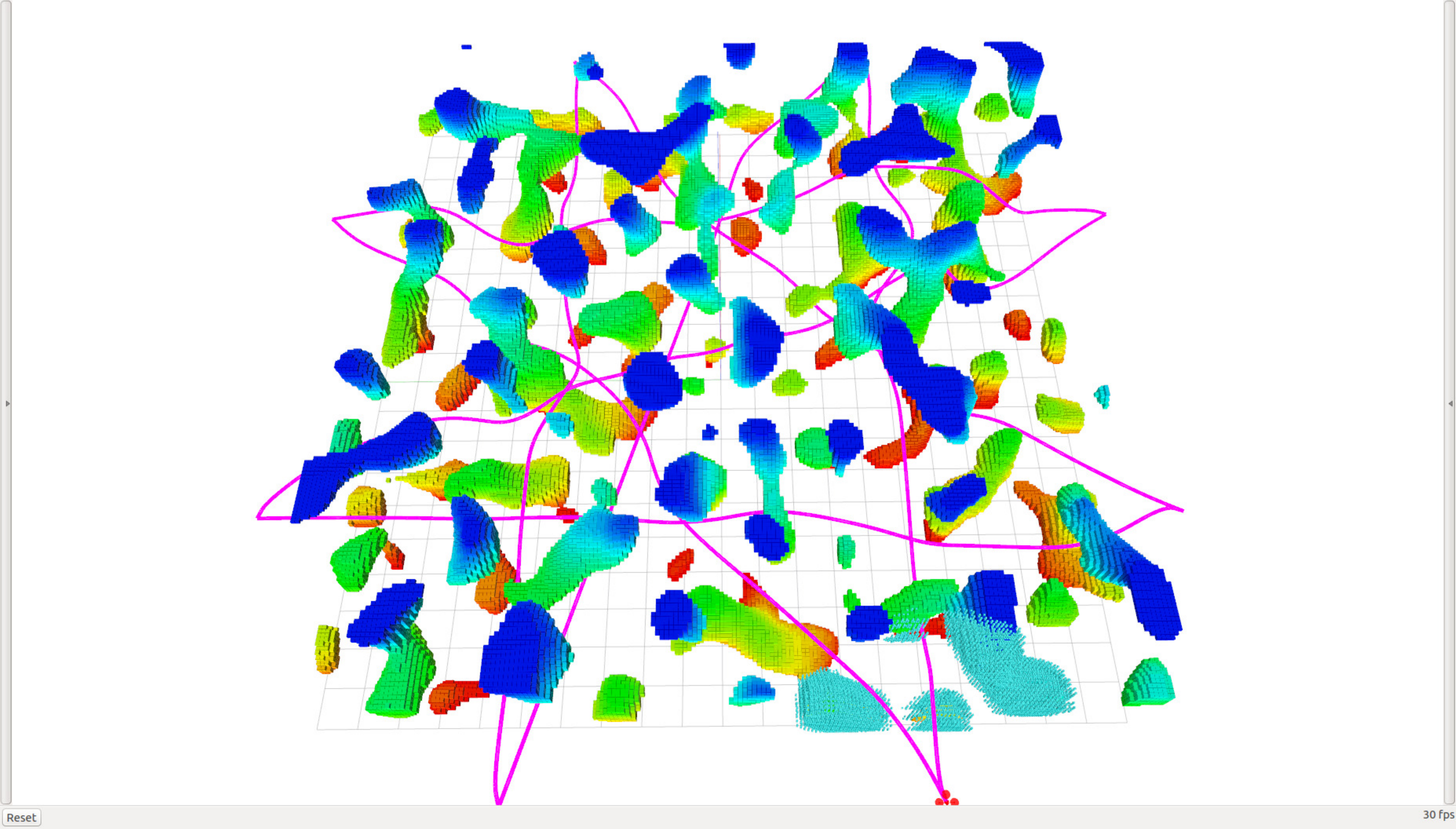}}
	\end{center}
	\vspace{-0.3cm}
	\caption{Illustration of our replanning system on different maps.}
	\label{fig:runtime_sim}
	\vspace{-1.0cm}
\end{figure}

\subsection{System Comparison With State-of-the-Art Methods}
In this section, we conduct a system comparison with two state-of-the-art methods, including the Continuous Trajectory (CT) \cite{oleynikova2016ct} method and Searched-based Motion Primitive (SMP) method by Liu \emph{et al.} \cite{Liu2017smp}. The SMP method \cite{Liu2017smp} constructs motion primitives online based on linear quadratic minimum time control, and conducts heuristic-guided search to generate dynamically feasible trajectories. We set up a challenging obstacle-cluttered 3-D complex simulation environment containing walls and 3-D steps, as shown in Fig.~\ref{fig:replan_tuple}. The replanning strategy is choosing a local target on a given straight-line guiding path. For CT method, the initial guiding path is parameterized using polynomial, following their practice. The CT method tries to use gradient-based trajectory optimization to push a short local trajectory out of the Euclidean Signed Distance Field (ESDF) formed by the obstacles. For SMP method, we use receding horizon planning in their paper \cite{Liu2017smp}, namely, for every time slot, we plan for the next time slot while executing current committed trajectory. The replanning strategy for our method is the passive mode, as introduced in Sec.\ref{sec:overview}. We evaluate the replanning system from the trajectory statistics and time efficiency perspectives (as shown in Tab.~\ref{tab:simulation_replan}, CT is excluded in the table due to its low success rate).

As shown in Fig.~\ref{fig:replan_tuple} (a), the CT method uses the ESDF to push the initial trajectory out of the obstacles, but suffers from a low success rate due to local minimum issue. The SMP method is not real-time when the full dynamic range is applied, since a large dynamic range requires a higher level of discretization of the control input, which will result in a dramatically growing state lattice. As shown in Tab.~\ref{tab:simulation_replan}, the SMP under a full dynamic bound may need up to $0.621s$ (with a $0.086s$ standard deviation), which is unacceptable for real-time replanning. If we limit the dynamic bounds to be conservative, the SMP method (so-called SMP Conservative) is efficient but scarifies the maneuverability. Note that only acceleration-controlled SMP can give a real-time performance ($10$Hz) in 3-D environments \cite{Liu2017smp}. To obtain a smooth enough control input, SMP uses the unconstrained QP formulation in \cite{richter2016polyunqp} to reparameterize the trajectory, but this post-processing has no safety or dynamical feasibility guarantee.

\begin{figure*}[t]
	\begin{center}
		\subfigure[CT Method] {\includegraphics[trim={0.2cm 0.4cm 0.2cm 0cm},clip, width=0.245\textwidth]{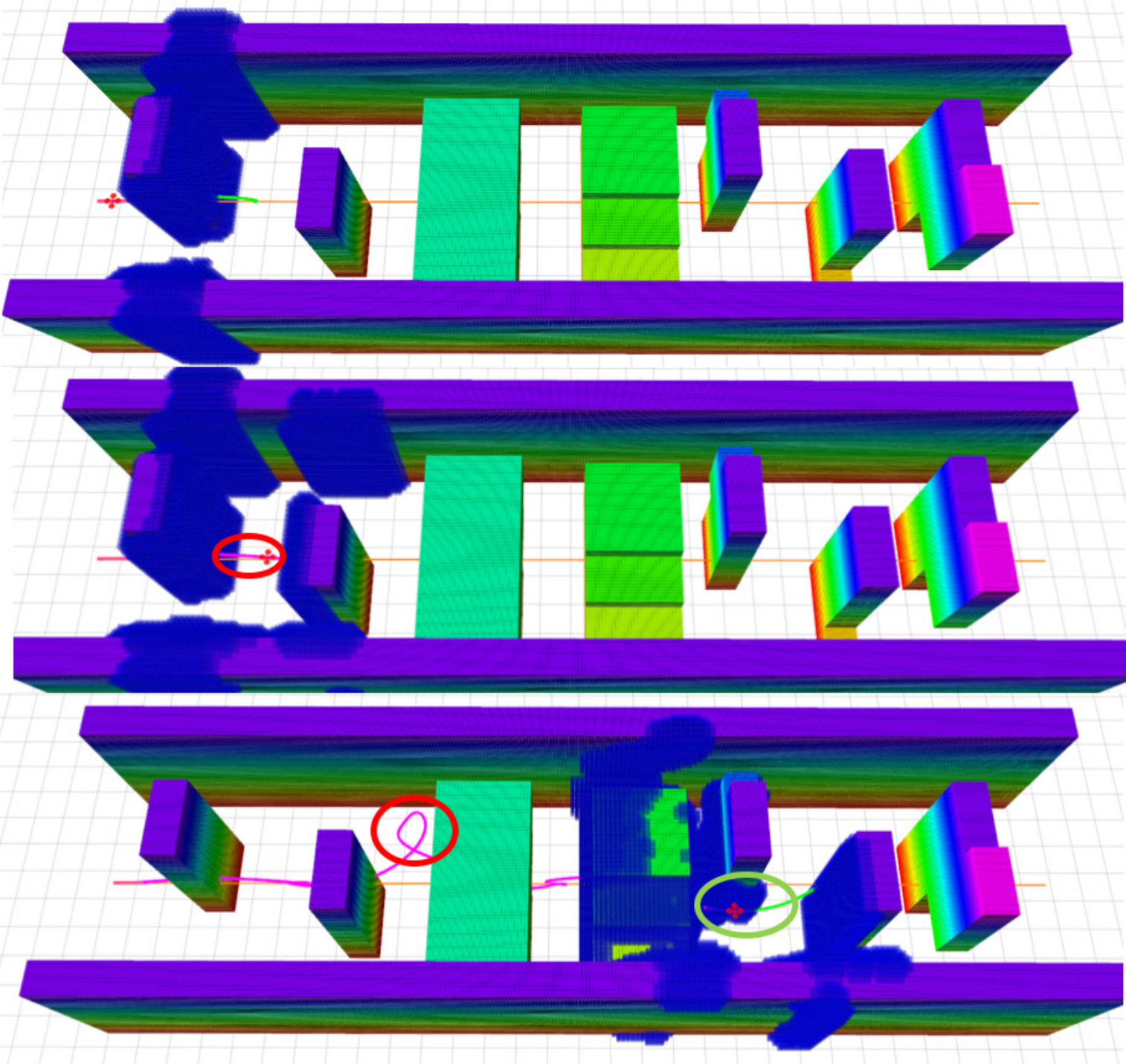}}
		\subfigure[SMP]{\includegraphics[trim={0cm 0cm 0cm 0cm},clip,width=0.24\textwidth]{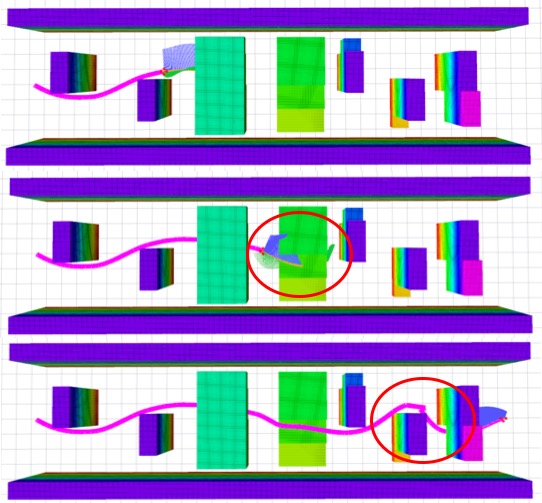}}
		\subfigure[SMP (Conservative)]{\includegraphics[trim={0.2cm 0cm 0.2cm 0.2cm},clip,width=0.226\textwidth]{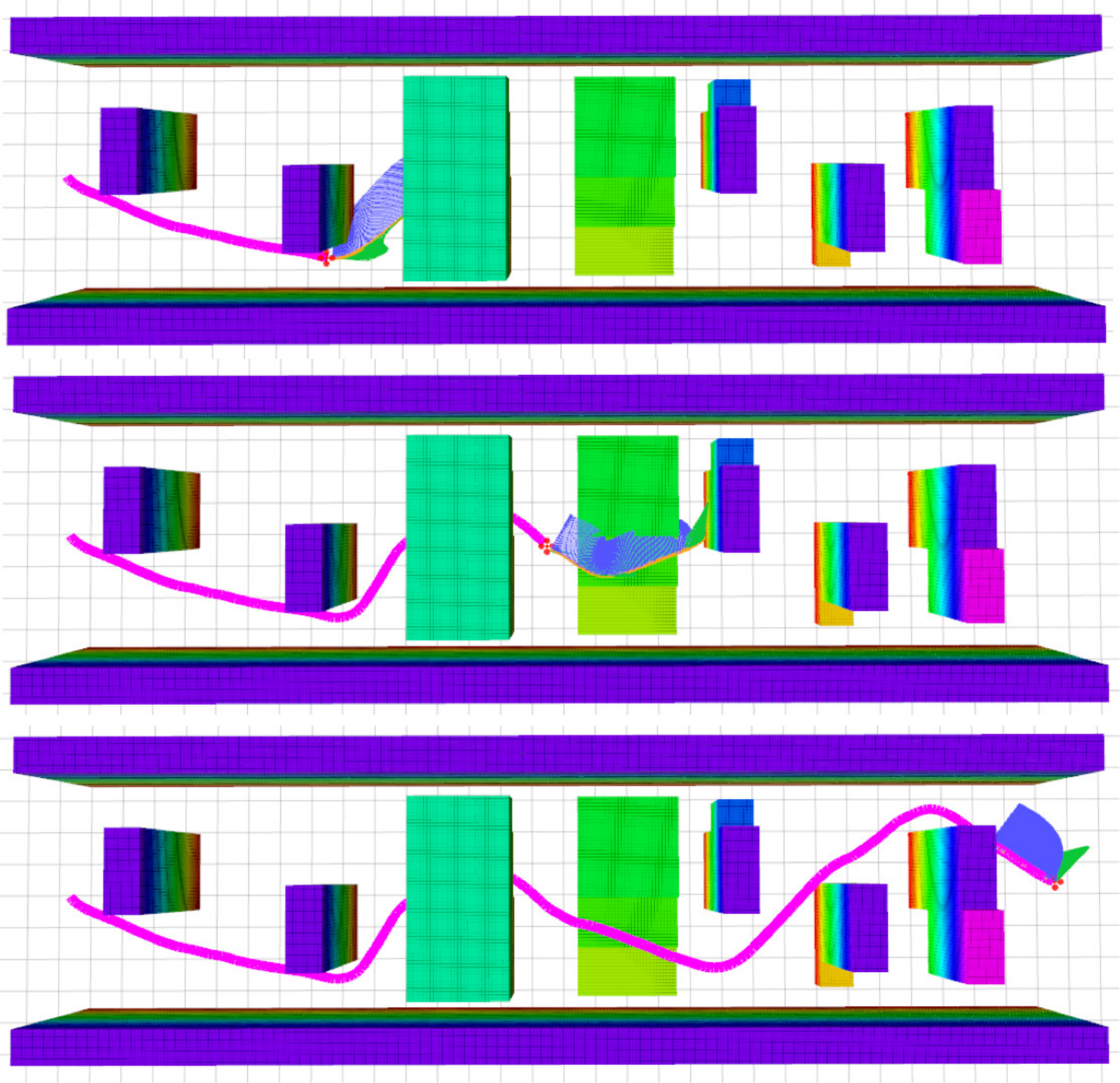}}
		\subfigure[Our Method]{\includegraphics[trim={0cm 0cm 0cm 0cm},clip,width=0.240\textwidth]{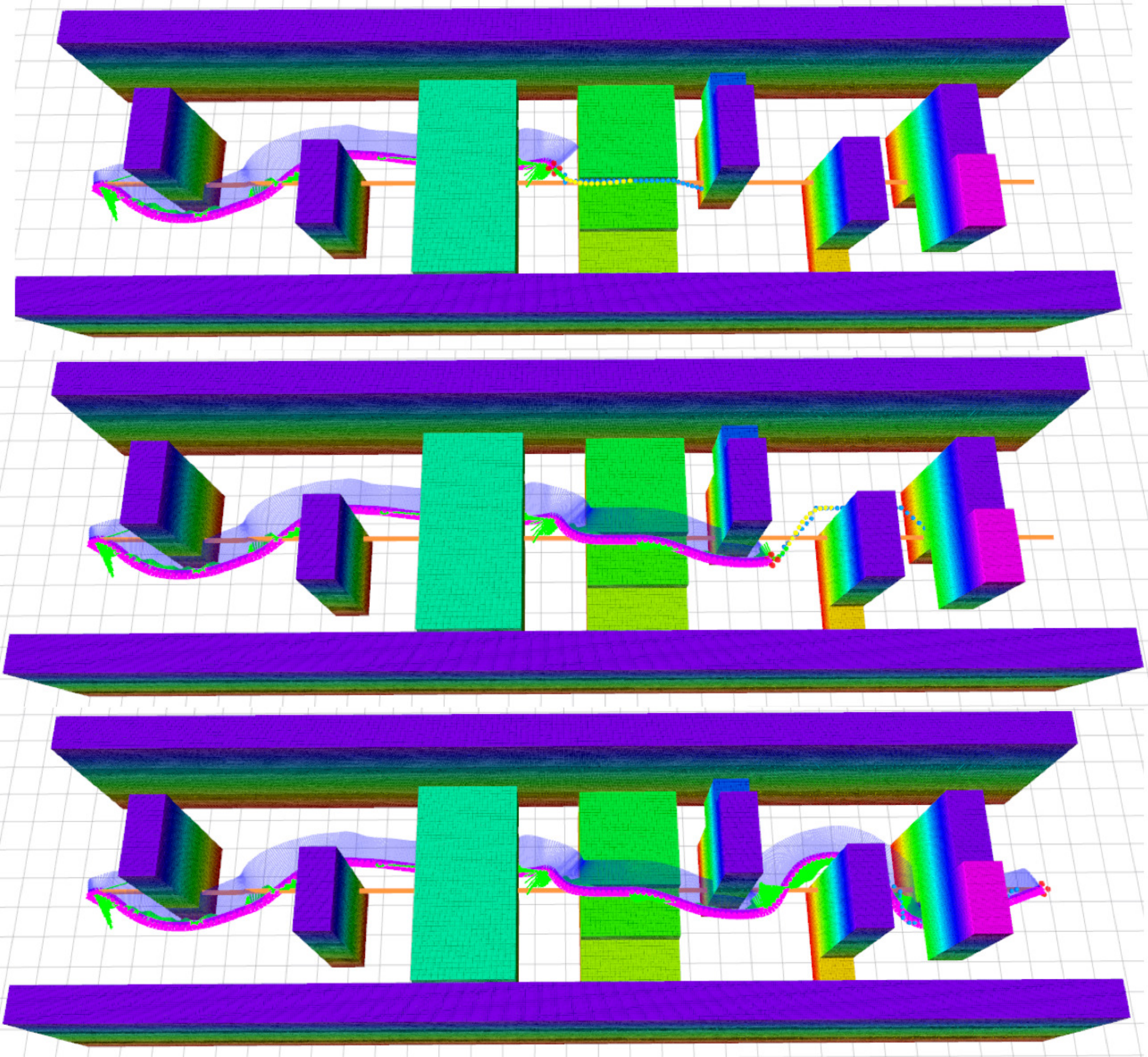}}
		\vspace{-0.3cm}
	\end{center}
	\caption{For the CT method in (a), ESDF is shown in blue. The red circle in (a) marks the part that is either infeasible or jerky due to the local minimum issue. For SMP with a full dynamic bound in (b), the primitive duration is set to $1s$ and level of discretization is set to 3. The red circle marks the part which does not satisfy the real-time requirement ($10$Hz). For SMP (Conservative) in (c), the maximum velocity is set to $2m/s$, the maximum acceleration is set $2m/s^2$, and the level of discretization is set to 1. For our method in (d), we use the default parameters in Sec.~\ref{sec:analysis}. }
	\label{fig:replan_tuple}
	\vspace{-1.0cm}
\end{figure*}

The RBK method can be done efficiently under full dynamic bounds, since grid-based B-spline search actually induces a discretization of the state space. The RBK search outputs a snap-continued trajectory, while the real-time SMP is only velocity-continued. The limitation of the RBK method is its greedy nature and limited representation of trajectories due to the discretization of the grid. The advantage of using RBK search is to obtain a dynamically-feasible time-parameterized trajectory efficiently. Thanks to this property, the post-optimization process has a feasible initial solution and rarely fails. Moreover, the EO refinement provides both safety and dynamical feasibility guarantee.

\begin{table}
	\centering
	\caption{Performance of Different Replanning Methods}
	\label{tab:simulation_replan}
	\resizebox{\columnwidth}{!}{
		\begin{tabular}{|c|c|c|c|c|c|}
			\hline
			\multirow{3}{*}{Method} & \multicolumn{2}{c}{\textbf{Trajectory Statistics}} & \multicolumn{3}{|c|}{\textbf{Time Efficiency}(s)} \\
			\cline{2-6}
			&  Mean		 & Max& 	\multirow{2}{*}{Ave} & \multirow{2}{*}{Max} & \multirow{2}{*}{Std} \\[-5pt]
			& Vel. ($m/s$) & Acc.($m/s^2$) &  &  &  \\
			\hline
			\multicolumn{1}{|c|}{SMP}   	 & 1.46  & 1.73  & 0.064 & 0.621  & 0.086 \\
			\hline
			\multicolumn{1}{|c|}{SMP (Conservative)  }  & 1.20  & 1.56  & 0.010 & 0.080  & 0.012 \\
			\hline
			\multicolumn{1}{|c|}{Our Method} & 1.17  & 2.69  & 0.035 & 0.064  & 0.012  \\
			\hline
		\end{tabular}
	}
	\vspace{-1.0cm}
\end{table}

\section{Experimental Results}
\label{sec:experimental}
For onboard testing, the parameters are as follows: the time step $\Delta t$ is set to $0.35s$; the maximum velocity and maximum acceleration are set to $1.2m/s$ and $2.0m/s^2$ repectively; and the local planning range is set to $10m \times 6m \times 1.1m$ (the corresponding gird size is $55 \times 35 \times 6$).

\subsection{Indoor Replanning Performance}
As shown in Fig.~\ref{fig:onboard_replan}, our replanning system works in complex 3-D environment with only local map. The whole trajectory length is $18.6m$ and total trajectory execution time is $43.4s$. The average velocity of the quadrotor is $0.45m/s$ with a maximum velocity of $0.79m/s$. The maximum acceleration of the trajectory is $0.58m/s^2$ and the whole trajectory is dynamically feasible. There are totally $125$ calls of RBK search (active mode) with average computation time $0.010s$, since the grid size is smaller compared to that in simulation. There are $105$ calls of EO. The average computation time of elastic tube expansion and  elastic optimization is $0.001s$ and $0.031s$, respectively.
\begin{figure}[htb]
		\vspace{-0.0cm}
	\begin{center}
\subfigure{\includegraphics[trim={0cm 0cm 0cm 0cm},clip, width=0.16\textwidth]{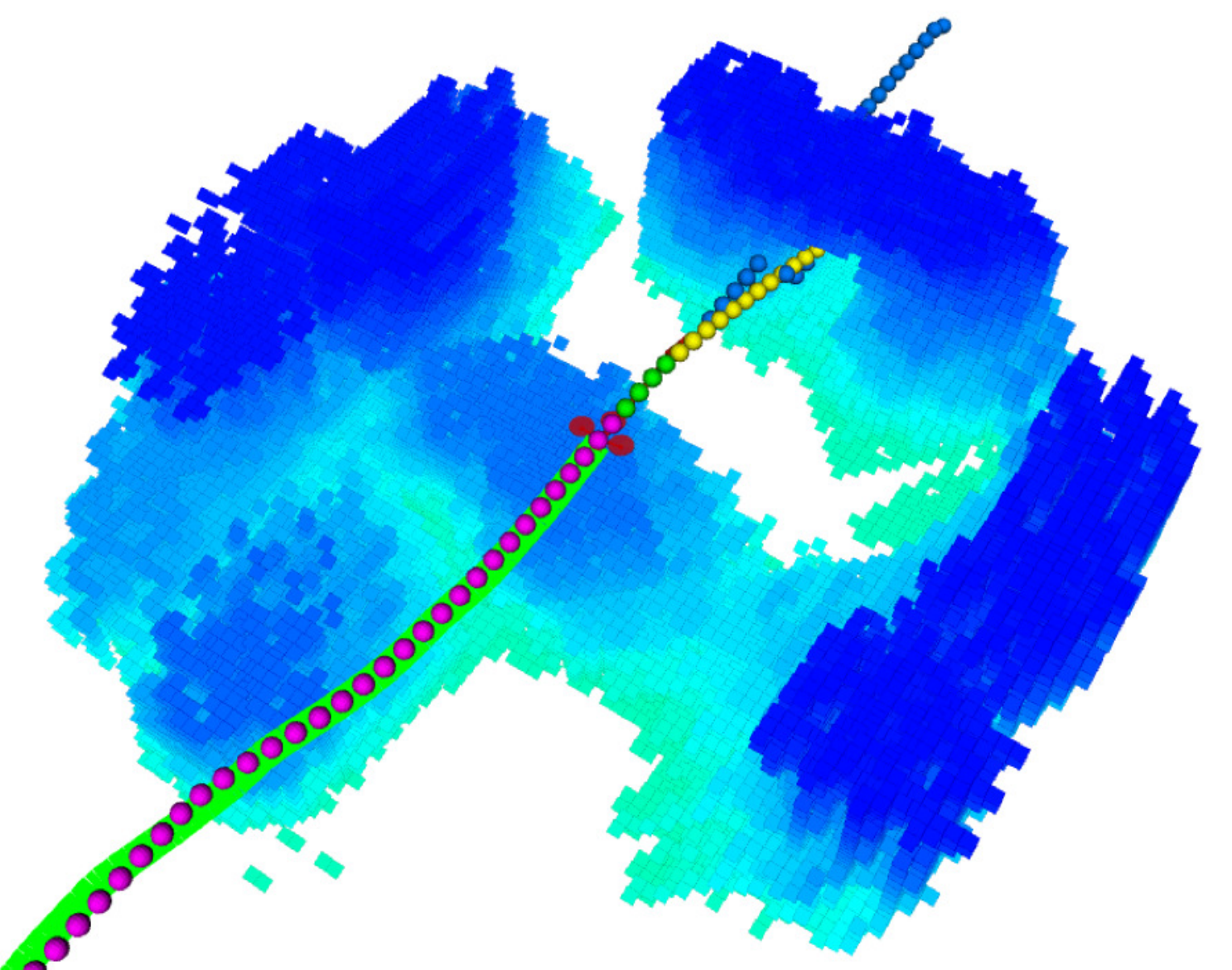}}
	\subfigure{\includegraphics[trim={0.7cm 0cm 0.5cm 0.0cm},clip, width=0.185\textwidth]{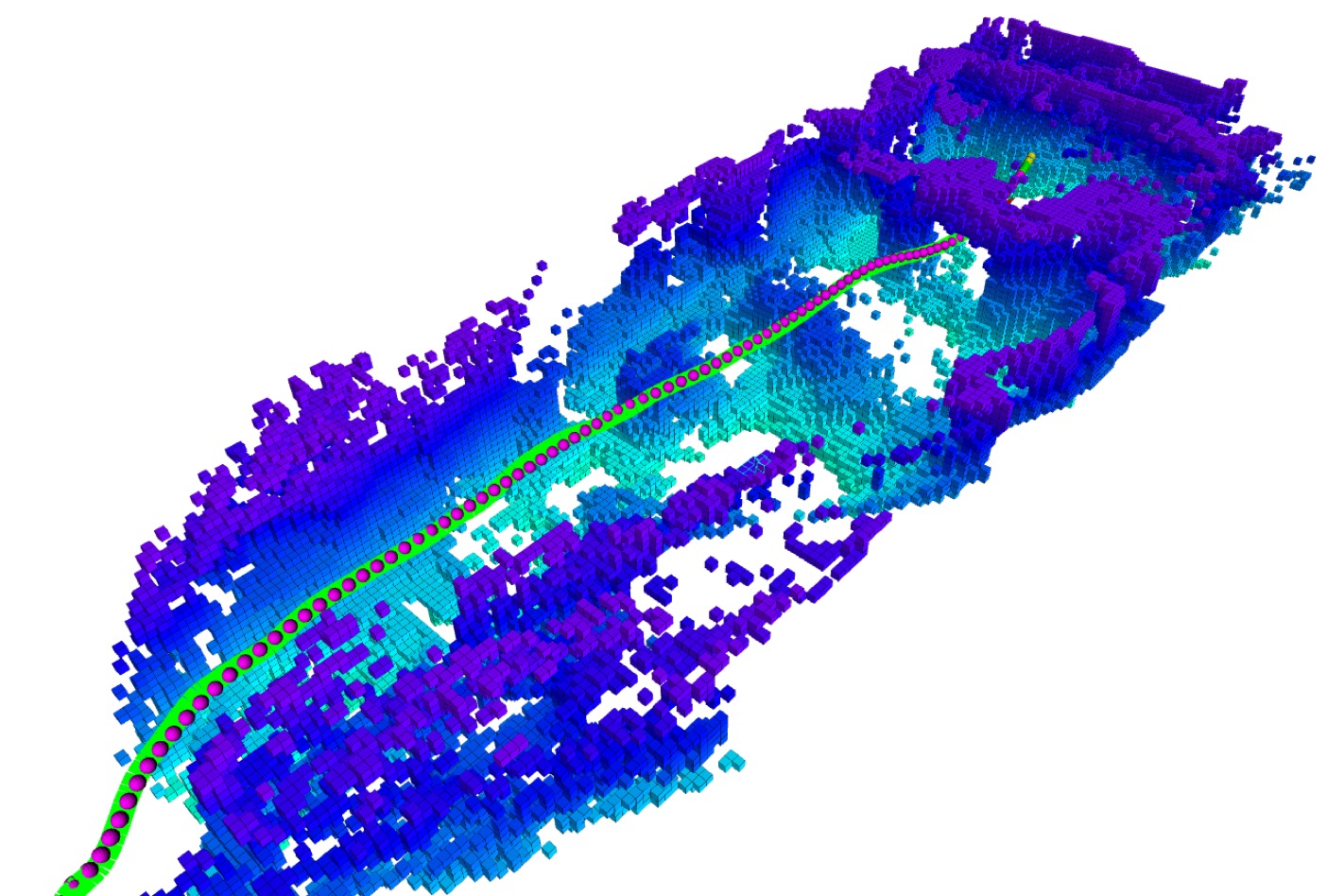}}
	\end{center}
	\caption{In (a), only local map is available for replanning. The control points found by RBK (in blue), executed control points (in pink), committed control points (in green), and optimizing control points (in yellow) are marked. The whole trajectory is shown in (b) and the trajectory following result is shown in green.}
	\label{fig:onboard_replan}
	\vspace{-0.6cm}
\end{figure}

\subsection{Outdoor Replanning Performance}
As shown in Fig.~\ref{fig:onboard_plan_outdoor}, we demonstrate the outdoor experiments. For Fig.~\ref{fig:onboard_plan_outdoor}~(a), the trajectory length is $19.6m$ and total execution time is $41.3s$. The average velocity of the quadrotor is $0.49m/s$. The maximum acceleration of the trajectory is $1.06m/s^2$ and the whole trajectory is dynamically feasible. There are totally $35$ calls of RBK search (passive mode) with average computation time $0.025s$. The average computation time of elastic tube expansion and  elastic optimization is $0.001s$ and $0.030s$, respectively. For Fig.~\ref{fig:onboard_plan_outdoor}~(b), the performance is similar and the detailed statistics are shown in the video attachment.
\begin{figure}[htb]
	\vspace{-0.0cm}
	\begin{center}
	\subfigure[]{\includegraphics[trim={0cm 0cm 0cm 0cm},clip,width=0.16\textwidth]{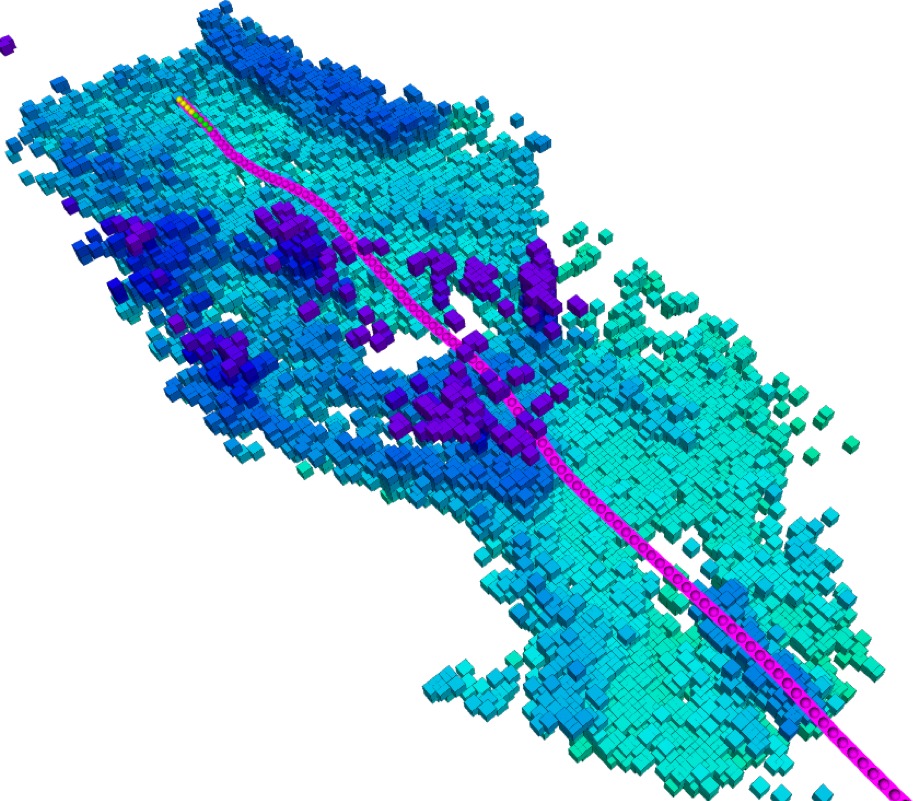}}
	\subfigure[]{\includegraphics[trim={0cm 0cm 0cm 0cm},clip,width=0.15\textwidth]{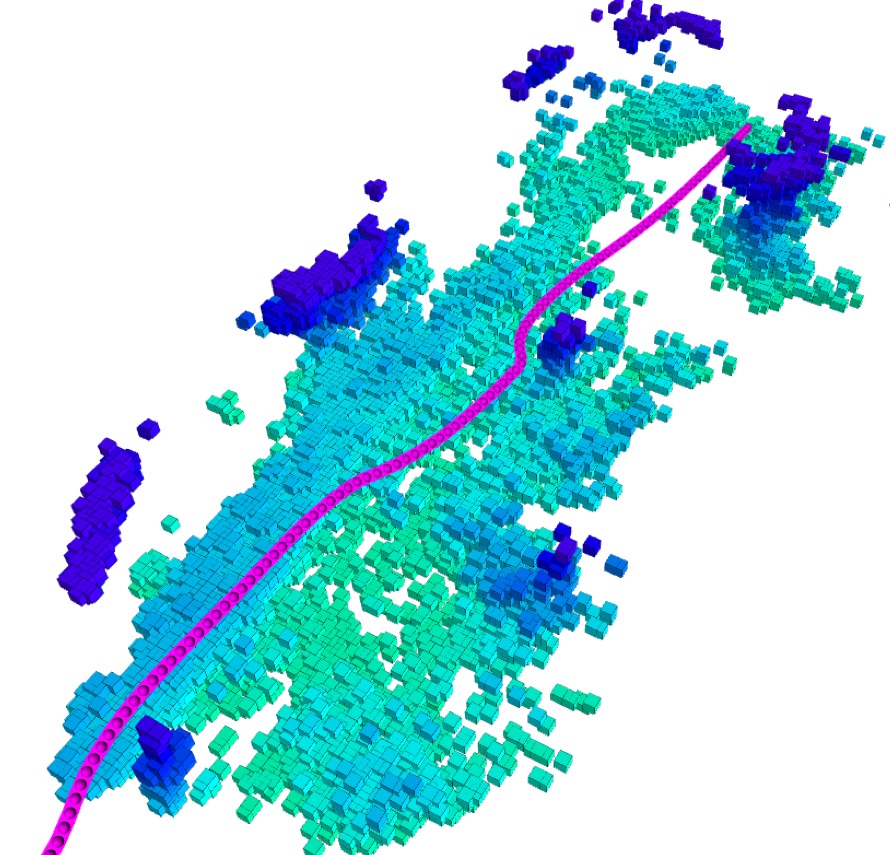}}
	\end{center}
	\vspace{-0.3cm}
	\caption{Illustration of the outdoor experiments. (a) shows a flight through a pavilion (part of the map on the top is cut for visualization purpose). (b) shows the flight avoiding trees.}
	\label{fig:onboard_plan_outdoor}
	\vspace{-0.4cm}
\end{figure}

\section{Conclusion and Future Work}
\label{sec:conclusion}
We focus on the replanning scenario for quadrotors. We present the RBK search method, which transforms the position-only search (such as A* and Dijkstra) into a kinodynamic search, by exploring the properties of B-spline. The RBK method facilitates the quadrotor's non-static initial state and produces dynamically feasible time-parameterized trajectory instead of unparameterized path. We present an EO approach as the post-optimization process to refine the trajectory, while the safety and dynamical feasibility are guaranteed. The dynamical feasibility is guaranteed for both the front-end and back-end to ensure the robustness of the optimization process. We design a receding horizon replanner using sliding window optimization strategy and local control property of B-spline. System comparisons and onboard experiments are provided to validate the superior performance of our replanning system. Extensions to this work may include developing informative heuristics for the RBK search to enhance the performance.

\bibliography{paper}
\end{document}